\theoremstyle{example}
\theoremstyle{plain}
\newtheorem{theorem}{Theorem}[section]
\newtheorem{proposition}[theorem]{Proposition}
\theoremstyle{definition}
\newtheorem{definition}{Definition}
\theoremstyle{remark}
\newtheorem{remark}[theorem]{Remark}
\def\eqref#1{equation~\ref{#1}}
\def\1{\bm{1}}
\def\vd{{\bm{d}}}
\def\vh{{\bm{h}}}
\def\vm{{\bm{m}}}
\def\vp{{\bm{p}}}
\def\vq{{\bm{q}}}
\def\vs{{\bm{s}}}
\def\vv{{\bm{v}}}
\def\vw{{\bm{w}}}
\def\vx{{\bm{x}}}
\def\vy{{\bm{y}}}
\def\vz{{\bm{z}}}
\def\mA{{\bm{A}}}
\def\mB{{\bm{B}}}
\def\mC{{\bm{C}}}
\def\mD{{\bm{D}}}
\def\mI{{\bm{I}}}
\def\mJ{{\bm{J}}}
\def\mL{{\bm{L}}}
\def\mM{{\bm{M}}}
\def\mT{{\bm{T}}}
\def\mV{{\bm{V}}}
\def\mW{{\bm{W}}}
\DeclareMathAlphabet{\mathsfit}{\encodingdefault}{\sfdefault}{m}{sl}
\SetMathAlphabet{\mathsfit}{bold}{\encodingdefault}{\sfdefault}{bx}{n}
\newcommand{\tens}[1]{\bm{\mathsfit{#1}}}
\def\tH{{\tens{H}}}
\renewcommand{\eqref}[1]{Eq.~\ref{#1}}
\newcommand*{\tran}{^{\mkern-1.5mu\mathsf{T}}}
\newcommand{\QED}{}
\newcommand\norm[1]{\left\lVert#1\right\rVert}
\colorlet{commentA}{ForestGreen!100!}
\begin{document}
\title{Detecting Invariant Manifolds in ReLU-Based RNNs}
\author{%
  Lukas Eisenmann\textsuperscript{1,2,*}, Alena Brändle\textsuperscript{1,2*} Zahra Monfared\textsuperscript{3,4}, and Daniel Durstewitz\textsuperscript{1,2,3} \\ \\
  \{lukas.eisenmann,daniel.durstewitz\}@zi-mannheim.de\\
  \textsuperscript{1}Dept. of Theoretical Neuroscience, Central Institute of Mental Health, Mannheim, Germany \\\textsuperscript{2}Faculty of Physics and Astronomy, Heidelberg University, Heidelberg, Germany\\ 
\textsuperscript{3}Interdisciplinary Center for Scientific Computing, Heidelberg University, Germany\\ 
\textsuperscript{4}Faculty of Mathematics and Computer Science, Heidelberg University, Heidelberg, Germany\\
\textsuperscript{*}These authors contributed equally\\}
\maketitle
\begin{abstract}
Recurrent Neural Networks (RNNs) have found widespread applications in machine learning for time series prediction and dynamical systems reconstruction, and experienced a recent renaissance with improved training algorithms and architectural designs. Understanding why and how trained RNNs produce their behavior is important for scientific and medical applications, and explainable AI more generally. An RNN's dynamical repertoire depends on the topological and geometrical properties of its state space. Stable and unstable manifolds of periodic points play a particularly important role: They dissect a dynamical system's state space into different basins of attraction, and their intersections lead to %
chaotic dynamics with fractal geometry. Here we introduce a %
novel algorithm for detecting these manifolds, with a focus on piecewise-linear RNNs (PLRNNs) employing rectified linear units (ReLUs) as their activation function. We demonstrate how the algorithm can be used to trace the boundaries between different basins of attraction, and hence to characterize multistability, a computationally important property. We further show its utility in finding so-called homoclinic points, the intersections between stable and unstable manifolds, and thus establish the existence of chaos in PLRNNs. Finally we show for an empirical example, electrophysiological recordings from a cortical neuron, how insights into the underlying dynamics could be gained through our method.
\end{abstract}

\section{Introduction}

Recurrent neural networks (RNNs) are widely employed for time series forecasting \citep{Park,gu2023mamba} and dynamical systems (DS) reconstruction \citep{hess,brenner_tractable_2022,brennerintegrating,platt2023constraining}, especially in scientific applications like climate modeling \citep{patel} or neuroscience \citep{durstewitz2023reconstructing}, as well as in medical domains. RNNs experienced a recent revival due to the advance of new powerful training algorithms that avoid vanishing or exploding gradients \citep{hess}, and novel network architectures \citep{sch,rusch2020coupled,peng2023rwkv,gu2023mamba,ruschlong}, in particular in the context of state space models (which are essentially linear RNNs with nonlinear readouts and input gating; \citep{gu2023mamba,orvieto2023resurrecting}). What lags behind, like in many other areas of deep learning, is a thorough theoretical understanding of the behavior of these systems and how they achieve the tasks they were trained on. Yet, such an understanding is crucial especially in scientific and medical areas where we are often interested in using trained RNNs as surrogate models for the underlying DS, providing mechanistic insight into the dynamical processes that generated the observed time series.

Formally, RNNs are recursive maps, hence discrete-time dynamical systems \citep{mikhaeil2021difficulty}. 
Dynamical systems theory (DST) offers a rich repertoire of mathematical tools for analyzing the behavior of such systems \citep{guckenheimer2013nonlinear}. Yet, exploration of high-dimensional RNN dynamics remains challenging due to limitations in current numerical methods, which struggle to scale and often only yield approximate results \citep{katz_using_2018,golub_fixedpointfinder_2018}. 

The dynamical behavior of a system is governed by its state space topology and geometry \citep{hasselblatt2002handbook}, most prominently topological objects like attractors, such as stable fixed points, cycles,
or chaotic sets, which determine the system's 
long-term behavior. Similarly important are the stable and unstable manifolds of fixed points and cycles, although they received much less attention in the scientific and ML communities. Stable manifolds are the sets of points in a system's state space that converge towards an equilibrium or periodic orbit in forward-time, while unstable manifolds, conversely, are the sets of points that converge to it in backward-time. 
Stable manifolds of saddle points delineate the boundaries between basins of attraction in multistable systems, that is, systems that harbor multiple attractor objects between which it can be driven back and forth by perturbations like external inputs or noise \citep{feu18}. Multistability has been hypothesized to be an important computational property of RNNs, most prominently in computational neuroscience where it has been linked to working memory \citep{dur0} or decision making \citep{wang2008decision}. 

Tracing out stable and unstable manifolds is also important for finding homo- and heteroclinic orbits, which connect a cyclic point to itself or to another such point, respectively. These orbits provide a skeleton for the dynamics, forming structures like separatrix cycles \citep{per} and heteroclinic channels \citep{rabinovich2008transient} which have been implied in flexible sequence generation. 
Intersections between stable and unstable manifolds further give rise to so-called homoclinic points which create sensitive regions in a system's state space associated with chaos \citep{wig}. Chaotic behavior, in turn, or regimes at the edge-of-chaos, have been associated with increased expressivity (larger function classes that can be emulated) and computational power in RNNs \citep{siegelmann1992computational,bertschinger2004real,Pereira} \citep{mikhaeil2021difficulty,hess}. 

Here we introduce a novel algorithm for computing the stable and unstable manifolds of fixed and cyclic points for the class of piecewise-linear (PL) RNNs (PLRNNs), which use PL nonlinearities like the ReLU as their activation function. Unlike traditional techniques designed for smooth dynamical systems—such as numerical continuation methods 
\citep{krauskopf2007numerical} —the proposed method explicitly exploits the piecewise-linear structure of 
PLRNNs to enable the exact location of stable and unstable manifolds. In contrast to ODE systems, methods for discrete-time systems are much more scarce, especially if these involve discontinuities in their Jacobians like ReLU-based RNNs. PLRNNs have emerged as one of the most powerful classes of models for DS reconstruction \citep{hess,brenner_tractable_2022,brennerintegrating,nassartree}, partly because the linear subspaces of such models support the indefinite online retention of memory contents without exploding or vanishing gradients \citep{sch,orvieto2023resurrecting,gu2023mamba}, and partly because their PL structure makes them more tractable \citep{coombes2024osc}. In fact, PL systems have been popular in engineering \citep{bemporad2000piecewise,carmona2002simplifying,storace2004piecewise} and mathematics \citep{alligood1996,avrutin_continuous_2019,coombes2024osc,simpson2023compute} for decades for these reasons, and a variety of PL models similar in structure to PLRNNs exist, like threshold-linear networks (TLNs; \cite{curto2023graph,parmelee2022core,morrison2016diversity,hahnloser2000permitted}), switching linear DS (SLDS; \cite{linderman_recurrent_2016,linderman_bayesian_2017}), or Lur'e systems in control theory \citep{su2023necessity,waitman2017incremental,bill2016stability}, with a substantial body of mathematical theory built around them \citep{parmelee2022core,morrison2016diversity,brenner2024alrnn,avrutin_continuous_2019,amaral_piecewise_2006,simpson2023compute,coombes2024osc,casdagli1989nonlinear,ives2012detecting,costa2019adaptive}.
Further, efficient algorithms for exactly localizing fixed and cyclic points in polynomial time exist \citep{eisenmann_bifurcations_2023}, on which we will build here. We illustrate how our algorithm can be employed to delineate the boundaries of basins of attraction and to detect homoclinic intersections, thus establishing the existence of chaos, and show how it can be used on empirical data to gain insight into dynamical mechanisms.

\section{Related work}
While there is an extensive literature by now on DS reconstruction with RNNs \citep{brenner_tractable_2022,brenner2024alrnn,hess,patel,platt2023constraining,Cestnik}, work on algorithms for dissecting topological properties of trained systems is much more scarce. Existing research is almost exclusively focused on algorithms for locating stable and unstable fixed points and cycles \citep{golub_fixedpointfinder_2018,eisenmann_bifurcations_2023}, but not the invariant manifolds associated with them, despite their crucial role in structuring the state space and dynamics. More generally in the DST literature, a related class of methods are so-called continuation methods which numerically trace back important curves and manifolds in dynamical systems, mostly for systems of ordinary differential equations \citep{krauskopf2007numerical,osinga2014computing}. Recently methods have been developed specifically for PL maps \citep{simpson2023compute}, but, similar to continuation methods, they seriously suffer from the curse of dimensionality. Thus, current methods effectively work only for very low-dimensional ($\leq 5d$) systems, in contrast to the size of many modern RNNs used for time series forecasting or dynamical systems reconstruction. The specific structure of PL maps, of which all types of PLRNNs are specific examples, considerably eases the computation of certain topological properties. This also provided the basis for the SCYFI algorithm for locating fixed points and cycles, which scales polynomially, in fact often linearly, with the dimensionality of the PLRNN's latent space \citep{eisenmann_bifurcations_2023}. To our knowledge, no existing method efficiently detects stable and unstable manifolds in discrete-time RNNs. Our approach fills this gap by leveraging the PL nature of ReLU-based RNNs to construct manifolds directly, bypassing the limitations of traditional techniques. 

\section{Methods}

\subsection{PLRNN architectures}\label{sect:PLRNNs} 
A PLRNN \citep{dur} is simply a ReLU-based RNN 
\begin{equation}
        \vz_{t} = \mA \vz_{t-1} + \mW \Phi(\vz_{t-1})        + \vh ,
    \label{eq:plrnn1}
\end{equation}
where $\mA \in \mathbb{R} ^{M\times M}$ is a diagonal matrix, $\mW \in \mathbb{R} ^{M\times M}$ an off-diagonal matrix, the element-wise ReLU non-linearity $\Phi(\cdot)=\text{max}(0, \cdot)$, and \( \vh \in \mathbb{R} ^{M}\) a bias term. 
Several variants of the PLRNN have been introduced to enhance its expressivity or reduce its dimensionality \citep{brenner_tractable_2022}, of which we picked for demonstration here the shallow PLRNN (shPLRNN; \citep{hess}), $\vz_{t} = \mA \vz_{t-1} + \mW_1 \Phi(\mW_2 \vz_{t-1} +\vh_2) + \vh_1$ with %
$\mW_1 \in \mathbb{R} ^{M\times H}$, $\mW_2 \in \mathbb{R} ^{H\times M}$, and the recently proposed almost-linear RNN (ALRNN; \citep{brenner2024alrnn}), which attempts to use as few nonlinearities as needed for the problem at hand, $\Phi(\vz_t) = \bqty{\vz_{1,t}, \hdots, \vz_{M-P,t}, \text{max}(0, \vz_{M-P+1,t}), \hdots, \text{max}(0, \vz_{M,t})}$.

The PL structure of these PLRNNs can be exposed by rewriting the ReLU in \cref{eq:plrnn1} as
\begin{equation}
\label{eq:PLRNN}
\vz_{t} = F_{\mathbf{\theta}}(\vz_{t-1}) = (\mA + \mW\mD_{t-1})\vz_{t-1} + \vh ,
\end{equation}
$\boldsymbol{D_{t}} := \text{diag}(\vd_{t})$ with $\vd_{t} = \pqty{d_{1,t}, d_{2,t}, \hdots, d_{M,t}}$ and $d_{i,t} = 0$ if $z_{i,t} \leq 0$ and $d_{i,t} = 1$ otherwise \citep{mon2}. For the ALRNN, we have $d_{1:M-P,t}=1 \forall t$. 
Hence, we have $2^M$ linear subregions for the standard PLRNN, 
$ \leq \sum^{M}_{k=0} \binom{H}{k} $ for the shPLRNN \citep{pals2024inferring}, and $2^P$ for the ALRNN.

\subsection{Mathematical preliminaries}\label{sec-math}
Recall that a fixed point of a recursive map $\vx_t=F(\vx_{t-1})$ is a point $\vx^*$ for which $\vx^*=F(\vx^*)$, and a cyclic point with period $m$ is a fixed point of the $m$-times iterated map, i.e. such that $\vx_{t+m}=F^m(\vx_t)=\vx_t$ (hence, a fixed point is a period-1 point). An $m$-cycle is a periodic sequence $\{\vx^*_{1} \dots \vx^*_{m}\}$ of such points with all points distinct, $\vx^*_{i} \neq \vx^*_{j} \ \forall 1 \leq i < j \leq m$.  

\begin{definition}[\textbf{Un-/stable manifold}]
Let $F : \mathbb{R}^M \to \mathbb{R}^M$ be a map and $\vp$ be a hyperbolic period-$m$ cyclic %
point of $F$. The \emph{local stable manifold} of $\vp$, %
$W_{loc}^{\{+1\}}(\vp)$, is defined as
\[ W_{loc}^{\{+1\}}(\vp) \, := \,  \{ \vx \in \mathbb{R}^M \, :\,
F^{nm}(\vx) \to \vp 
\quad \text{as} \quad n \to \infty \}.
\]
The \emph{local unstable manifold} of $\vp$, %
$W_{loc}^{\{-1\}}(\vp)$, is defined as
\[ W_{loc}^{\{-1\}}(\vp) \, := \,  \{ \vx \in \mathbb{R}^M \, :\,
F^{-nm}(\vx) \to \vp 
\quad \text{as} \quad n \to \infty \}.
\]
The \emph{global stable manifold} is the union of all preimages 
of the local stable manifold, and the \emph{global unstable manifold} is created by the union of all images (forward iterations) of the local unstable manifold \citep{pat2}:
\begin{align}\label{eq:manifolds}
 W^{\{+1\}}(\vp) \, := \, \bigcup_{n=1}^{\infty} F^{-n} \Big(  W_{loc}^{\{+1\}}(\vp)\Big),  \hspace{.8cm}
 %
 %
 W^{\{-1\}}(\vp) \, := \, \bigcup_{n=1}^{\infty} F^{n} \Big(  W_{loc}^{\{-1\}}(\vp)\Big).
\end{align}
\end{definition}

If the map $F$ is noninvertible (i.e., does not have a unique inverse), the (global) stable manifold could be disconnected, making its computation hard as we need to trace back disconnected sets of points in time to determine it. However, most commonly we aim to approximate smooth continuous-time DS $\dot{\vx}=f(\vx)$ in DS reconstruction through our RNN map $F_{\mathbf{\theta}}$. For such systems, the Picard-Lindelöf theorem guarantees uniqueness of solutions and the flow (solution) operator $\phi(t,\vx_0)=\vx_0+\int_{0}^{t}f(\bm{u})\,\dd \bm{u}$ will be a diffeomorphism \citep{per}, hence invertible, such that it is reasonable to assume (or enforce) invertibility for $F_{\mathbf{\theta}}$ as well.\\ 

For a PLRNN, since it is a PL map, in each linear subregion we have the following explicit (non-recursive) expression for the dynamics along trajectories ($\vz_{t} = \tilde{\mW} \vz_{t-1} + \vh, \tilde{\mW}:=\mA + \mW\mD(\vz_{t-1})$): 
\begin{align}
\label{eq:gen_soln_main}
\vz_t 
&= \sum_{\substack{j=1 \\ j \neq i}}^{n} c_j \lambda_j^t \vv_j 
\;+\; 
\lambda_i^t \sum_{r=1}^{d} c_r \frac{t^{r-1}}{(r-1)!} \vw_r 
\;+\;
\sum_{k=0}^{t-1} \tilde{\mW}^k \vh.
\end{align}
where the $c_i \in \mathbb{R}$ are determined from the initial condition, $\vv_i$ are the eigenvectors, $\vw_i$ are generalized eigenvectors (associated with defective eigenvalues with geometric multiplicity less than the algebraic).  
This formula describes the exponential evolution for all non-defective eigenvalues $\lambda_j$ and polynomially modulated exponentials for the defective eigenvalue $\lambda_i$, see Appx. \ref{App: degenerate Eigenvalues} for details \citep{per,hirsch2013differential}. While this expression is for single orbits, it describes how curvature in the manifolds can arise as we cross the boundaries of linear subregions (cf. sect. \ref{sec:manifold_constr}).

Stable manifolds of saddle objects segregate the state space into
different basins of attraction \citep{ganatra2022sketching}, which are sets of points from which the state evolves toward one or the other attractor. Formally, they are given by \citep{alligood1996}: 
\begin{definition}[\textbf{Basin of attraction}]
Let \( F:\mathbb{R}^M \to \mathbb{R}^M \) be a %
map.
The \textit{basin of attraction} of an attractor $\mathcal{A}$ is the largest open set \( B(\mathcal{A})\subseteq \mathbb{R}^M \) (containing $\mathcal{A}$) such that for every point \( \vx \in B(\mathcal{A}) \), the iterates of $\vx$ under the map $F$
converge to $\mathcal{A}$ in the forward limit:
\[
B(\mathcal{A}) \;=\; \Bigl\{\, \vx \in \mathbb{R}^M: 
  d (F^k(\vx), \mathcal{A}) \to 0 \text{ as } k \to \infty 
\Bigr\},
\]
where \( F^k \) denotes the \( k \)-times iterated map \( F \).
\end{definition} %

Stable and unstable manifolds are crucially important for the system dynamics not only because they dissect the state space into different regions of flow, but also because their intersections can give rise to complex types of dynamics like heteroclinic channels or chaos \citep{rabinovich2008transient,per}.
\begin{definition}[\textbf{Homoclinic orbit}]\label{def-hom}
Let $\vp$ be a saddle fixed point (or saddle cycle) of the map $F: \mathbb{R}^M \to \mathbb{R}^M$.
A \emph{homoclinic orbit} is a trajectory \( \mathcal{O}_{hom} =\{\vx_n\}_{n \in \mathbb{N}} \) that connects $\vp$ to itself, i.e., \( \vx_n \to \vp \) as \( n \to \pm \infty \). Hence,    
\[
\mathcal{O}_{hom} \subset W^{\{+1\}}( \vp ) \;\cap\; W^{\{-1\}}( \vp).
\]
If \( \vx \neq \vp \) is a point where the stable and unstable manifolds of $\vp$ intersect, then $\vx$ is referred to as a homoclinic point or intersection \citep{pat2,per}. Such an intersection of the stable and unstable manifolds leads to a horseshoe structure associated with a fractal geometry and chaos (see Appx. \ref{app:G},
\cite{wig}). 
\end{definition}

\begin{definition}[\textbf{Heteroclinic orbit}]\label{def-het}
Let $\vp$ and $\vq$ be two \emph{distinct} saddle fixed points (or saddle cycles) of the map $F$. A \emph{heteroclinic orbit} $\mathcal{O}_{het} =\{\vx_n\}_{n \in \mathbb{N}} $ connects two different fixed points \( \vp \) and \( \vq \), that is, \( \vx_n \to \vp \) as \( n \to +\infty \) and \( \vx_n \to \vq \) as \( n \to -\infty \). 
Heteroclinic intersections (or points) can be defined similarly to homoclinic intersections, and, like homoclinic points, inevitably lead to chaos \citep{wig,pat2}.
\end{definition}

\subsection{Locating un-/stable manifolds}\label{sec:manifold_constr}

\begin{algorithm}[!htbp]
\caption{Manifold Construction }
\label{alg:new_manifold}
\textbf{Hyperparameters:} $N_s$: Number of support points to be sampled\newline
 \hspace*{8em} $N_{iter}$: Maximum number of iterations 
 \newline
\textbf{Input:} $\bm{p}:$ Periodic point \newline
            \hspace*{3em} $ \sigma \in \{-1,1\}$: flag indicating whether stable (+1) or unstable (-1) manifold is to be computed \newline
              \hspace*{3em} $ \boldsymbol{\theta} $: PLRNN parameters \newline
   \textbf{Output:} $\{Q^\sigma\}$: Support \& eigenvectors parameterizing the manifold in each subregion
   \begin{algorithmic}[1]
   \State $E^\sigma  \gets \Call{GetEigenVec}{\boldsymbol{\theta}, \mD(\bm{p})}$ \hfill $\triangleright$ Compute eigenvectors at $\bm{p}$
   \State $Q^\sigma \gets \{\{\bm{p}, E^\sigma\}\}$  \hfill $\triangleright$ Initialize set of manifold specifications
    \State $K_\text{seed}, K_\text{visited} \gets \{\Call{GetSubregionID}{\bm{p}}\}$ \hfill $\triangleright$ Initialize set of seed \& visisted subregions
    \While{$n \leq N_{iter} \wedge \neg \text{isempty}(K_\text{seed})$}
        \For{$k \in K_\text{seed}$}  \hfill $\triangleright$ Loop over regions to expand from
          \State $S^\sigma \gets \Call{SamplePoints}{Q^\sigma_k, {N_s}}$
    \hfill $\triangleright$ Sample $N_s$ points on the  manifold
            \State     $\Tilde{S}^\sigma \gets \Call{PropagateToNextRegion}{S^\sigma,\boldsymbol{\theta}}$
            \hfill $\triangleright$ Propagate forward/backward 
            using $F^\sigma_{\boldsymbol{\theta}}$
              \State $ K_{\text{con}(k)}\gets \Call{GetSubregionID}{\Tilde{S}^\sigma} \setminus K_\text{visited}$ \hfill $\triangleright$ Connecting regions not yet visited
            \For{$ j \in K_{\text{con}(k)}$}  \hfill $\triangleright$ Loop over connecting regions
                \State $\lambda_{j} \gets \Call{GetEigenVal}{\boldsymbol{\theta},\mD_{j}}$
              \hfill $\triangleright$ Compute eigenvalues

                \If{$\text{Im}(\lambda_{j}) \neq 0 \vee (\text{multiplicity}_{geo}(\lambda_{j}) < \text{multiplicity}_{alg}(\lambda_{j})) $}
                    \State $Q^\sigma \gets Q^\sigma \cup \{\Call{kPCA}{
                    \Tilde{S}^\sigma_{j}},(\Tilde{S}^\sigma_{j})_1\}$
                    \hfill $\triangleright$ Determine manifold by kernel-PCA
                \Else
                   \State $Q^\sigma \gets Q^\sigma \cup \{\Call{PCA}{\Tilde{S}^\sigma_{j}},(\Tilde{S}^\sigma_{j})_1\}$
                    \hfill $\triangleright$ Determine manifold by PCA
                \EndIf
            \EndFor
            \State $K_\text{visited} \gets K_\text{visited}\cup K_{\text{con}(k)}$ \hfill $\triangleright$ Update set of visited regions

            \EndFor
            \State $K_\text{seed} \gets K_\text{con}$ \hfill $\triangleright$ Make all connecting regions new seed
    \EndWhile
    \State \Return $\{Q^\sigma\}$
    \hfill $\triangleright$ Manifold segments in all subregions
\end{algorithmic}
\end{algorithm}

For our manifold detection algorithm, we exploit the PL structure of PLRNNs, which allows for an analytical construction of the manifolds within each linear subregion. We further assume that $F_{\mathbf{\theta}}$ is invertible (see above), such that the global un-/stable manifolds will be connected sets. We start by using a previously proposed algorithm, SCYFI (\cite{eisenmann_bifurcations_2023}, see also \cite{pals2024inferring}), to locate saddle (cyclic) points $\bm{p}$ of interest, for which we would like to construct the un-/stable manifolds. We then proceed as follows (Algo. \ref{alg:new_manifold}; see Fig. \ref{fig: evolution} for illustration of these steps in $1d$):
\begin{enumerate}[leftmargin=12pt] 
    \item \textit{Locally}, in the linear subregion harboring $\bm{p}$, the unstable (resp. stable) manifold of $\vp$ is simply given by the affine subspaces spanned by the eigenvectors with absolute eigenvalues within (stable) or outside of (unstable) the unit circle, and hence can easily be computed in closed form from the local Jacobian $\mA + \mW\mD(\vp)$. Note this remains true for complex eigenvalues (associated with spiral points), as these always come as conjugate pairs and hence always span a planar subspace. 
    \item To extend the un-/stable manifolds across the boundaries into neighboring linear regions, we randomly sample $N_s$ points on the manifold and propagate them forward by $F_{\mathbf{\theta}}$ (unstable) or backward by $F^{-1}_{\mathbf{\theta}}$ (stable) into neighboring regions. The hyper-parameter $N_s$ is chosen such that with high likelihood all neighboring subregions into which the manifold extends are reached, which is mostly fulfilled if we choose $N_s=k_{\text{neigh}} \times d$, the total number of neighboring subregions times the manifold dimension.\footnote{If unsure, one could increase $N_s$ stepwise until the number of subregions reached plateaus.} For the inversion of $F_{\mathbf{\theta}}$, specifically, we need to solve $\vz_{t-1} = (\mA + \mW\mD_{t-1})^{-1}(\vz_t - \vh)$. A solution to this eq. needs to be self-consistent, i.e. the signs of the $z_{i,t-1}$ on the l.h.s. need to be consistent with the entries $d_{i,t-1}$ in $\mD_{t-1}$ on the r.h.s. To address this, we introduce a simple heuristic: 1) Perform a backward step using the current linear subregion $\mD_{t}$; 2) perform a forward step using the resulting candidate solution \( \vz^*_{t-1} \); 3) if \( \vz_t==F_{\mathbf{\theta}}(\vz^*_{t-1}) \), we are done; 4) if not, we take the candidate's linear subregion $\mD(\vz^*_{t-1})$ to attempt a new inversion; 5) if this fails, we start checking the neighboring regions by iteratively flipping bits in $\mD_{t-1}$ (see Appx. \ref{Method details}, Algorithm \ref{Alg:back}, for details). 
    The efficiency of this procedure comes from the fact that usually the candidate will lie in the correct linear subregion, and -- if this is not the case -- it usually crosses only into neighboring regions.
    \item The manifolds cannot change their dimensionality across subregions, \textit{but they may become curved along one or more directions as they cross into a neighboring subregion} (for instance, if the dynamics in the new subregion is determined by a spiral, see example in Fig. \ref{fig: Curvature_Example}). 
    In theory, we only need to compute as many points in each subregion as required to uniquely anchor the manifold segment within that region. As we know the type of curvature qua \eqref{eq:gen_soln_main}, ideally these would just be $d+1$ points for a $d < M$-dimensional manifold even in the curved case, see Appx. \ref{app:param}. In practice, numerical issues (see below) and the type of approximation used may require sampling more points. 
    If the manifold segment is non-curved, we use PCA as a quick and simple means to obtain the affine subspace from $d+1$ support points. If the manifold is curved, we use kernel-PCA (see Appx.\ref{sec: additional algorithm}). The $\geq d$ eigenvectors with non-zero eigenvalues, together with at least one support point, exactly define manifold segment $Q^{\sigma}_k$ in subregion $k$ (see Appx. \ref{app:param} for another type of approximation for the manifold segment in each subregion). 
    \item We then iterate steps 1-3 for at most $N_{iter}$ times, recursively expanding the manifolds across the whole state space. The algorithm's final output will be for all $K$ subregions the set of all eigenvectors with at least one support vector positioning each manifold segment. 
\end{enumerate}
Note that within each linear region the construction of the manifold is thus analytical from the $\geq d+1$ support points, while across regions we recursively assemble the whole manifold following \eqref{eq:manifolds}. In systems with highly disparate timescales, however, numerical issues may arise as sampled support points mainly spread along the fast eigendirections with little variation along the slower eigendirections, leading to poor numerical approximation of a manifold. To balance the contributions from different directions we therefore adjust sampling density inversely to eigenvalue magnitude, with an additional hyperparameter $c$ controlling this weighting (see Appx.\ref{sec: additional algorithm}). 
\begin{figure}[!htbp]
    \centering
    \includegraphics[width=0.24\linewidth]{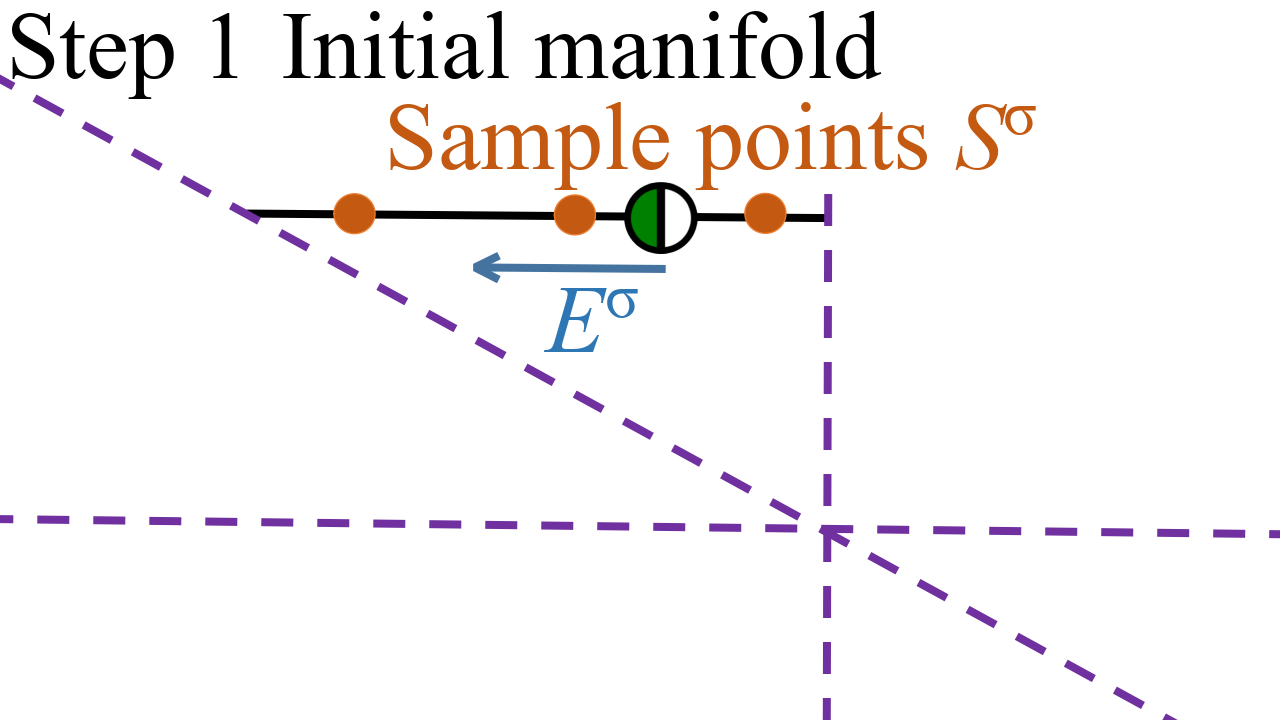}
    \includegraphics[width=0.24\linewidth]{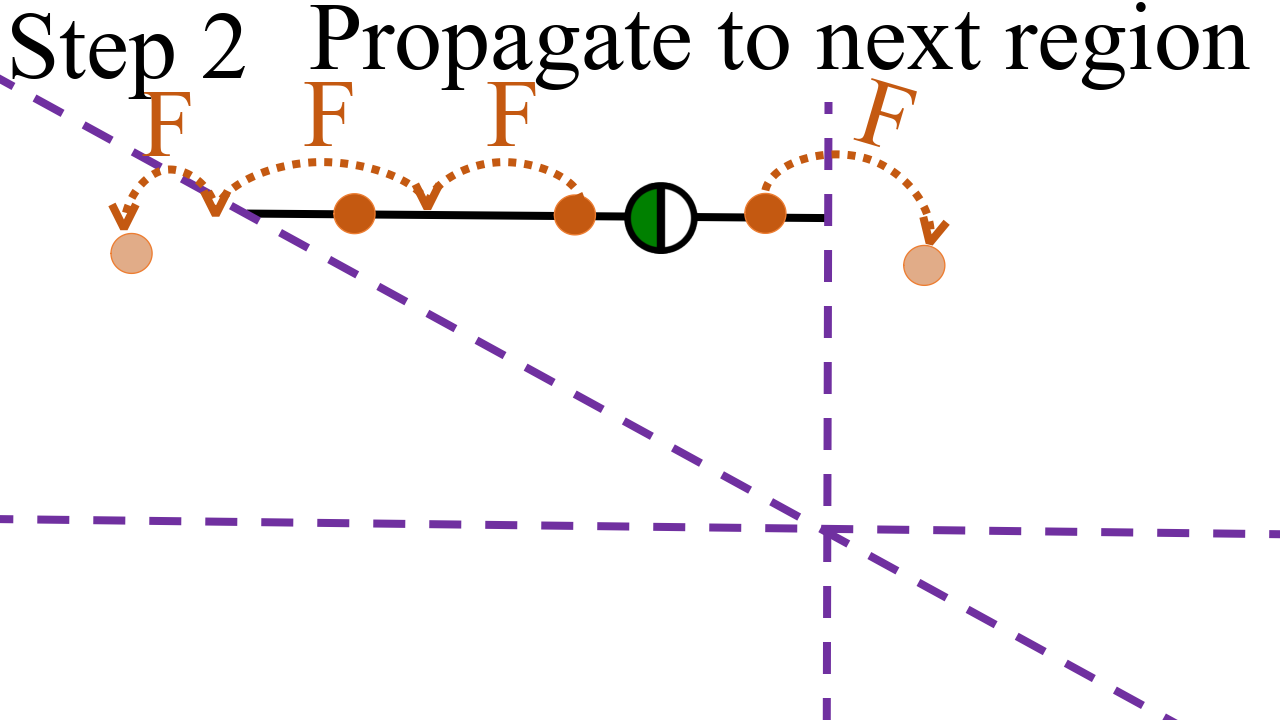}
    \includegraphics[width=0.24\linewidth]{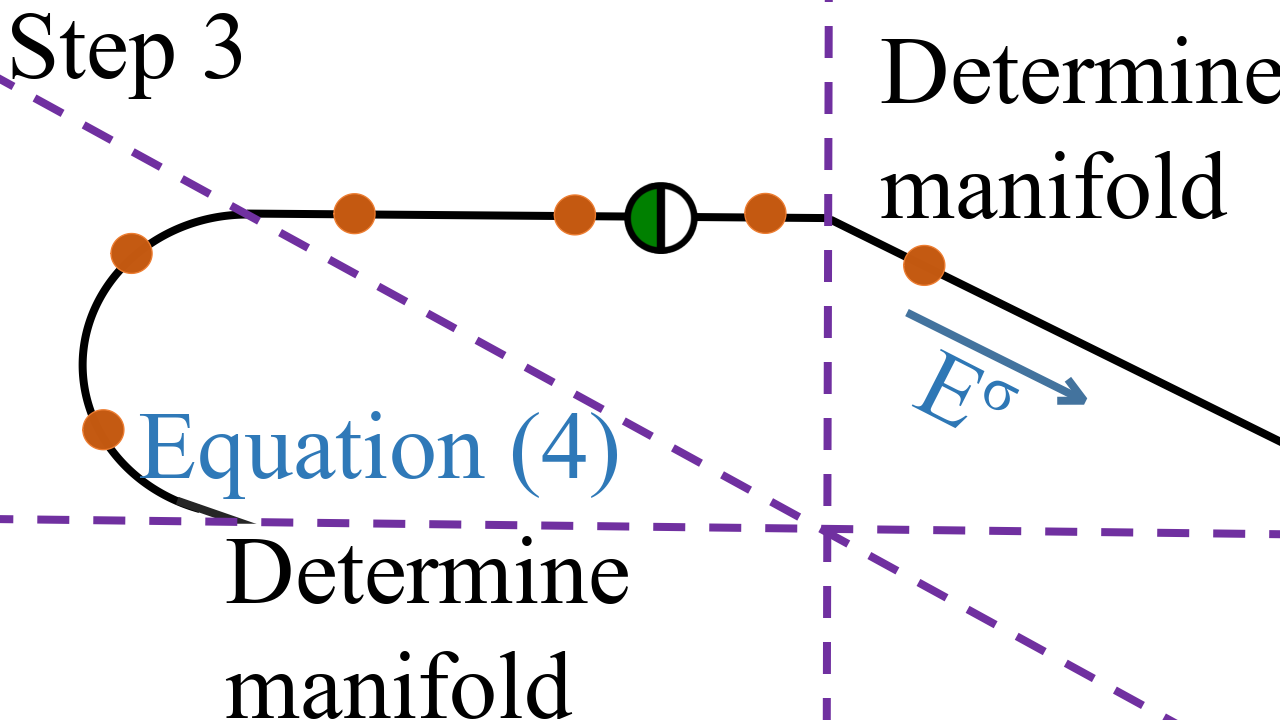}
    \includegraphics[width=0.24\linewidth]{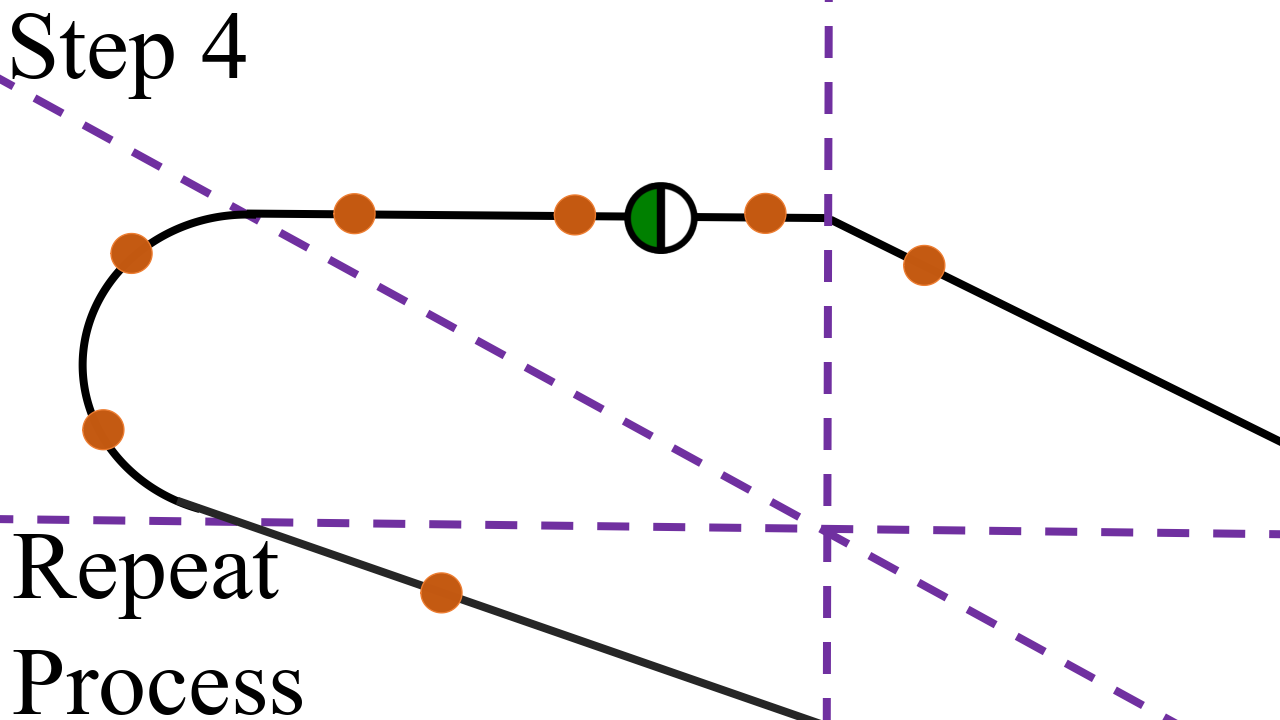}
    \caption{One-dimensional illustration of the iterative procedure for computing stable manifolds with subregion boundaries indicated in purple-dashed. 
    Step 1: The stable manifold (black) is initialized using the stable eigenvector $E^{\{+1\}}$ (blue) of the saddle point (green), and sample points (orange) are placed along it. Step 2: These points are propagated until they enter another linear subregion.
    Step 3: A new segment of the manifold is determined. Step 4: Repeating this process iteratively reconstructs the full global structure of the stable manifold. For a trained model example see Fig. \ref{fig: evolution2}.
  }    \label{fig: evolution}
\end{figure}

\subsection{Enforcing map invertibility by regularization}

\begin{figure*}[!htbp] 
 
\includegraphics[width=\textwidth]{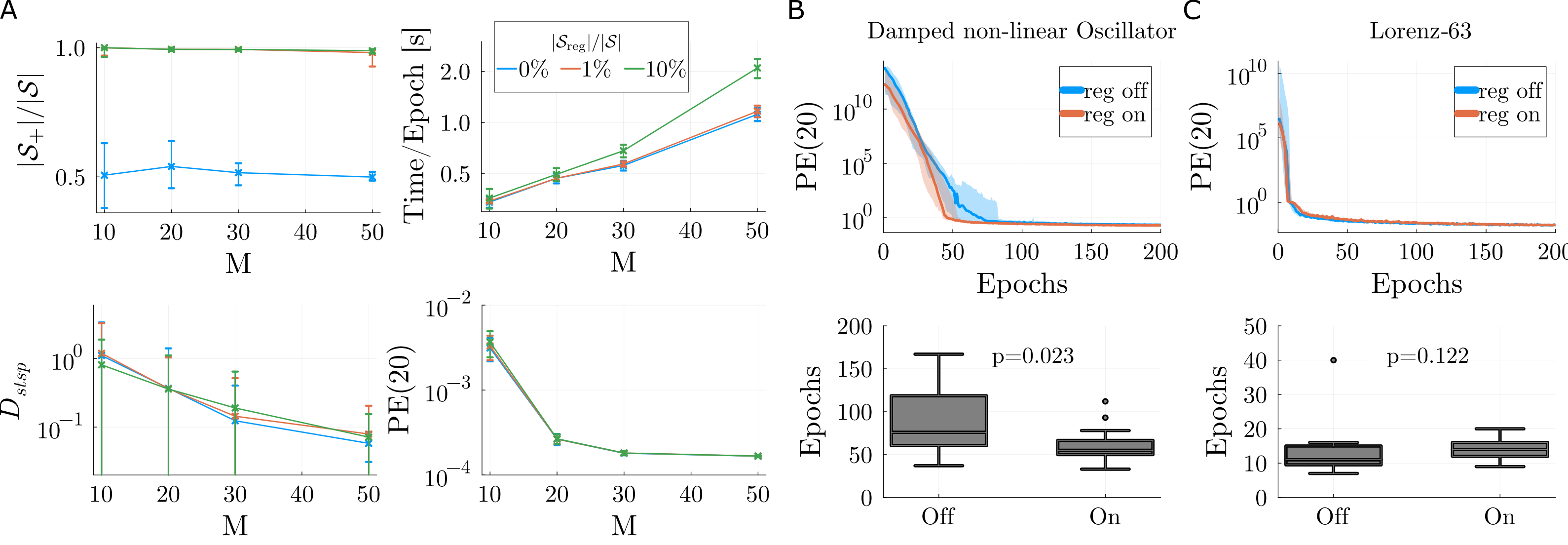}
 \vspace{-.6cm}
    \caption{
    A) Top-left: Relative proportion of subregions with a positive determinant of the Jacobian ($\text{det}(\boldsymbol{J})>0$), $\mathcal{S}_+$, as a function of latent space dimensionality $M$ for different proportions $|\mathcal{S}_\text{reg}|/|\mathcal{S}|$. 
    Medians $\pm$ interquartile range are shown. Top-right: Runtime and reconstruction quality (bottom) as a function of latent space dimensionality $M$ for different proportions 
    $|\mathcal{S}_\text{reg}|/|\mathcal{S}|$
    of subregions for which invertibility was enforced by regularization ($\lambda=0.1 \exp({M})$ in Eq. \ref{eq:Regularization}). Means across $100$ different training runs on the Lorenz-63 system $\pm$ SD are shown. Reconstruction quality was assessed through (dis-)agreement in attractor geometry (bottom-left; $D_{\text{stsp}}$, see Appx. \ref{sec:Dstsp}) and $20$-step-ahead prediction error (bottom-right). 
    B) Top: $20$-step-ahead prediction error, PE(20), as a function of the number of training epochs when the invertibility regularization, Eq. \ref{eq:Regularization}, was turned off (blue) vs. on (orange), for a 
    damped nonlinear oscillator. Bottom: Convergence to a predefined performance criterion ($\text{PE}(20) \leq 0.5$) was significantly faster with the regularization turned on vs. off. Median across $20$ trained models, error bands = interquartile range. C) Same as B for Lorenz-63 system.}
    \label{fig: 1}
\end{figure*} 
In designing our algorithm, we relied on invertibility of the RNN map $F_{\mathbf{\theta}}$ in the sense that $\exists ! \mathbf{D}_{t-1} \;\text{ for which }\;F_{\mathbf{\theta}}\!\left(F_{\mathbf{\theta}}^{-1}(\vz_t, \mathbf{D}_{t-1})\right) = \vz_t$. This is a reasonable assumption as the flow map $\phi(t,\vx_0)$ for most underlying ODE systems of interest, which we attempt to approximate, is invertible due to Picard-Lindelöf \citep{per}. 
However, empirically, invertibility of $F_{\mathbf{\theta}}$ is not always guaranteed (depending on the quality of the approximation). Thus, we enforce this condition by regularization during RNN training (for training itself we used the previously proposed shPLRNN \citep{hess} or ALRNN \citep{brenner2024alrnn}, see sect. \ref{sect:PLRNNs}). $F_{\mathbf{\theta}}$ is invertible, if the determinants of the Jacobian matrices of neighboring subregions have the same sign (sign condition) \citep{fuji}. This can be enforced (for any type of ReLU-based RNN) by adding the following regularization term to the RNN training loss (commonly the MSE loss, see Appx. \ref{Method details}):
\begin{align}
\label{eq:Regularization}
\mathcal{L}_{\text{reg}} = \lambda \cdot \frac{1}{|\mathcal{S}_\text{reg}|} \sum_{i \in \mathcal{S}_\text{reg}} \text{max}\left(0,-\det(\mJ_i)\right) ,
\end{align}
computed across a small subset $\mathcal{S_\text{reg}}$ of linear subregions, where $\mJ_i=\mA + \mW \mD_i $ is the Jacobian in subregion $i$, and $\lambda$ a regularization parameter. As shown in Fig. \ref{fig: 1}A (left), strategic sampling of only $1\%$ of linear subregions, which are traversed by actual trajectories (cf. \citep{brenner2024alrnn}), is sufficient to ensure almost full invertibility of $F_{\mathbf{\theta}}$ across the whole state space. At the same time, it hardly affects runtime (Fig. \ref{fig: 1}A, top-right) and reconstruction performance (Fig. \ref{fig: 1}A, bottom). 
Since invertible flows are an inherent property of many, if not most, DS of scientific interest, we would furthermore expect that this regularization does not hamper, or even improves, the reconstruction of DS from data, in particular if the systems carry an intrinsic time reversibility %
\citep{huh2020time}. This is confirmed in Fig. \ref{fig: 1}B-C which shows that with the invertibility regularization in place, training of a PLRNN on a $10$d damped nonlinear %
oscillator converges significantly faster to a good solution ($p=0.023$, Mann-Whitney U-test) than without the regularization (Fig. \ref{fig: 1}B), while hardly affecting performance on other systems like the chaotic Lorenz-63 \citep{lorenz1963deterministic} system (Fig. \ref{fig: 1}C; $p=0.122$).
\begin{figure*}[htbp]
\centering
\begin{minipage}{\textwidth}
 \includegraphics[width=\columnwidth]{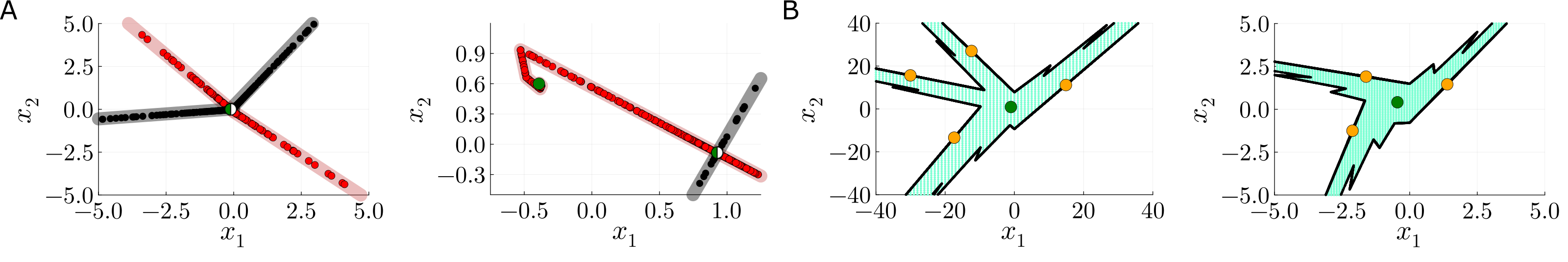}
 \vspace{-.7cm}
    \caption{Model validation. A) Two examples of saddle points (half-green) with stable (gray solid lines) and unstable (red solid lines) manifolds determined by our algorithm, and points (black/ red dots, respectively) sampled by the analytical resp. backward/ forward map, showing that these all fall onto the analytically determined manifolds. 
    B) Basins of attraction (light green, confirmed by sampling initial conditions and tracing their trajectories) of a stable fixed point (green dot) delineated by the stable manifold (black) of a 4-cycle (left) or 3-cycle (right).
    }
    \label{fig:2}
\end{minipage}
\end{figure*}

\section{Delineating basins of attraction}

\paragraph{Basic methods validation}

We first validate our algorithm on a simple toy example, a two-dimensional PL map for which we have analytical forms %
for the inverse and the fixed points %
(see Appx. \ref{app:analytical_test}%
for details). We chose parameters to produce a simple test case with only a single saddle point. Fig. \ref{fig:2}A %
confirms that points produced by backward (resp. forward) iterating the map all lie on the stable (resp. unstable) manifold as determined by our algorithm.
In this simple $2$d example, the manifolds correspond to line segments. 
Changing the PL map's parameters slightly, we obtain a stable fixed point coexisting with a saddle period-4 (Fig. \ref{fig:2}B, left) or period-3 (Fig. \ref{fig:2}B, right) cyclic point \citep{gardini2009connection}. Tracing back the stable manifold of the period-4 or period-3 saddle using our algorithm, we obtain the boundaries of the basins of attraction of the resp. fixed point (Fig. \ref{fig:2}B). Fig. \ref{fig:2}B further confirms this solution for the basin perfectly agrees with the `classical' (not scalable) numerical approach of drawing initial conditions on a grid in state space, and observing their behavior in the limit $t \to \infty$.

In these simple $2d$ test cases the quality of the manifold reconstruction can easily be verified visually. In higher dimensions, however, this will no longer be possible, and so we also introduce a simple metric to quantify the reconstruction quality. For a point $\bm{x}_0 \in \mathcal{U}$ sampled randomly either from a neighborhood $\mathcal{U}$ containing the relevant data range (see Appx. \ref{app:man} for details), or from the reconstructed manifold of a saddle point $\bm{p}$, $\bm{x}_0 \in W^{\sigma}(\bm{p}) \cap \mathcal{U}$, with $\sigma \in \{-1,+1\}$ indicating whether we are considering the unstable or stable manifold, we compute
\begin{equation}\label{eq:mf_metric}
    \delta_{\sigma}(\bm{x}_0):=\frac{\min_{k\sigma\geq0}\|F_{\boldsymbol{\theta}}^k(\bm{x}_0)-\bm{p}\|^2_2}{\|\bm{x}_0-\bm{p}\|^2_2} \in [0,1],
\end{equation}
For points randomly sampled from $\mathcal{U}$ or from $W^{\sigma}(\bm{p}) \cap \mathcal{U}$ for the example above, Fig.\ref{fig:quality_assessment}A in Appx.\ref{app:man} shows the distribution of $\delta_{\sigma}$, confirming $\delta_{\sigma} \approx0$ for points sampled on the manifolds, while off the manifold $\delta_{\sigma}$ naturally spreads across a broader range. For the latter we assign an index $I_\mathcal{U}[\delta_{\sigma}(\bm{x}_0 \in \mathcal{U})>\delta_{\sigma}^\text{max}] \in \{0,1\}$ with $\delta_{\sigma}^\text{max}:=\max[\delta_{\sigma}(\bm{x}_0\in W^{\sigma}(\bm{p}) \cap \mathcal{U})]$. 
In the following, we report $\Delta_{\sigma}:= \langle I_\mathcal{U} \rangle-\tilde{\delta}_{\sigma}(\bm{x}_0\in W^{\sigma}(\bm{p}) \cap \mathcal{U})$ as quality statistic, where $\langle \cdot \rangle$ is the mean and $\tilde{\delta}_{\sigma}$ the median.

\begin{figure*}[h]
\centering
\includegraphics[width=1.0\columnwidth]{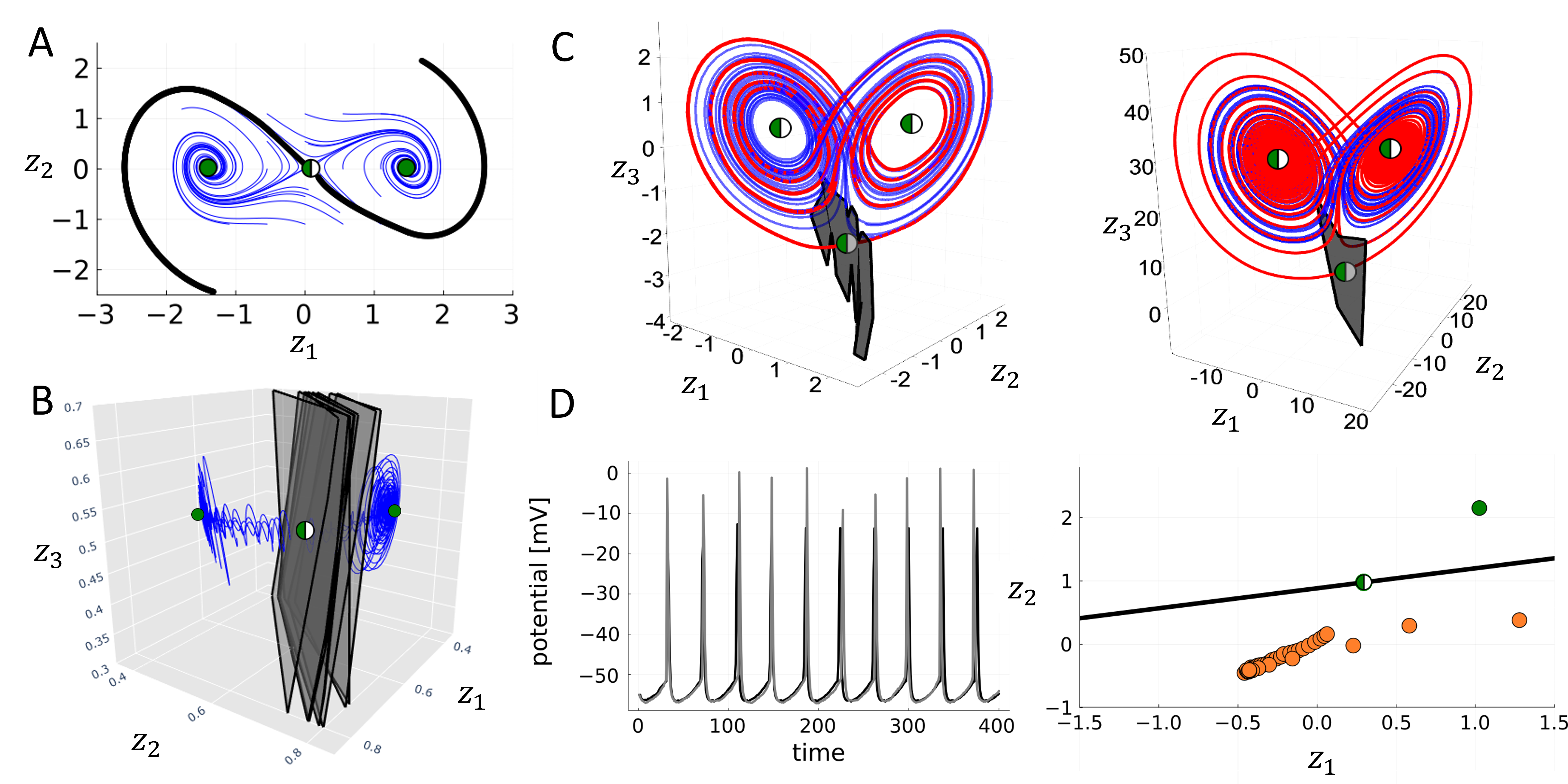}
    \caption{
    A) Reconstruction of the Duffing system (shPLRNN, $M=2$, $H=10$). Ground truth trajectories in blue, identified fixed points in green, and in gray the stable manifold of the saddle separating the two basins of attraction as determined by algorithm \ref{alg:new_manifold}. 
    B) $3$d subspace of the state space of an ALRNN ($M=15$, $P=6$) trained on a 2-choice decision making task. Two point attractors (green) were identified, with the stable manifold (black/gray) of a saddle (half-green) in the center separating the basins. The basin boundary consists of different segments, projected down from a truly $15$d space into a $3$d subspace (accounting for some of the `folded' appearance) and approximated linearly for visualization. 
    C) Reconstruction for a shPLRNN ($M=3$, $H=20$) trained on the Lorenz63 system. The system has two saddle-spirals in the center of the two lobes and a saddle at the bottom. In black and red are the stable and unstable manifolds, respectively, of the saddle as identified by our algorithm (left), while on the right as computed by numerical continuation of the \textit{original} system. The close agreement indicates the shPLRNN has correctly recovered the state space structure, although having been trained on data from the actual attractor only. 
   D) ALRNN ($M=25$, $P=6$) trained on electrophysiological recordings. Left: Time series of membrane voltage (true: gray, simulated: black); right: $2$d projection of the state space with stable manifold of a saddle (black) separating the basins of attraction of a stable fixed point (green) and the 38-cycle (orange) corresponding to the spikes. Note that the true stable manifold is a $24$d curved object, which for visualization purposes is represented here by a locally linear approximation in the shown $2$d subspace.}
    \label{fig:3} \vspace{-0.3cm}
\end{figure*}

\paragraph{Bistable Duffing system}
The \citet{duffing1918}  system is a simple $2$d nonlinear oscillator that can exhibit bistability between two spiral point attractors in certain parameter regimes (see Appx. \ref{Sec: Duffing} for more details). 
We trained a shPLRNN  ($M=2$, $H=10$; \cite{hess}) on this system using sparse teacher forcing \citep{mikhaeil2021difficulty}, and used SCYFI \citep{eisenmann_bifurcations_2023} to determine the fixed points. The basin boundary between the two spiral point attractors is the stable manifold of a saddle node in the center (Fig. \ref{fig:3}A), and -- as computed by our algorithm -- agrees with the trajectory flows of the \textit{true} system %
in blue, as further confirmed by $\Delta_{\sigma}\approx0.97$ (Appx. Table \ref{tab:mean_median_comparison} \& Fig.\ref{fig:quality_assessment}B).

\paragraph{Multistable choice paradigm}
Simple models of decision making in the brain assume multistability between several choice-specific attractor states, to which the system's state is driven as one or the other choice materializes \citep{wang2002probabilistic}. We trained an ALRNN ($M=15$, $P=6$) \citep{brenner2024alrnn} to perform a simple 2-choice decision making task taken from \citep{gerstner2014neuronal}, and, as before, use SCYFI to find fixed points and Algorithm \ref{alg:new_manifold} to determine the stable manifold of a saddle separating the two basins of attraction corresponding to the two choices. The basin boundary consists of different planar and curved pieces and is approximately visualized in Fig. \ref{fig:3}B in a $3$d subspace (of the $15$d system), together with the reconstructed system's fixed points (green) and some trajectories of the \textit{true} system in blue ($\Delta_{\sigma}\approx0.95$ in this case, Appx. Table \ref{tab:mean_median_comparison} \& Fig. \ref{fig:quality_assessment}C). While visualization becomes tricky in these higher-dimensional cases, %
computing distances to the manifold, using the construction in Appx. \ref{app:param}, %
provides further information: In this example, we find that both stable fixed points are about equally far from the basin boundary ($d\approx0.34$ vs. $d\approx0.32$), giving the two possible choices about equal weight.

\paragraph{Lorenz-63 attractor}
The Lorenz-63 model of atmospheric convection is probably the most famous example of a chaotic attractor. We reconstruct this system with a shPLRNN ($M=3$, $H=20$). Besides the chaotic attractor (blue) and two unstable spiral points (green), the system has a saddle node for which we compute the stable and unstable manifolds $(\Delta_{\sigma}\approx0.78$, Appx. Table \ref{tab:mean_median_comparison} \& Fig.\ref{fig:quality_assessment}D). Fig. \ref{fig:3}C confirms that the manifolds computed by Algorithm \ref{alg:new_manifold} for the trained shPLRNN (left), agree even well with those determined by numerical %
integration of the \textit{original} Lorenz ODE system (right). This example also proves that the shPLRNN faithfully captured the geometrical structure of the state space, beyond just reconstruction of the chaotic attractor itself.

\paragraph{Empirical example: Single cell recordings}
In Fig. \ref{fig:3}D we trained an ALRNN ($M=25$, $P=6$) on membrane potential recordings from a cortical neuron \citep{hertag2012approximation}. The trained ALRNN contains a 38-cycle which corresponds to the rhythmic spiking activity in the real cell. In addition it has a stable fixed point and a saddle whose stable manifold (determined by our algorithm, $\Delta_{\sigma}\approx0.76$) separates the stable cycle from the stable fixed point, as illustrated in Fig. \ref{fig:3}D (right), where we visualized the manifold through a locally linear approximation. Although we can compute the full global $24$-dimensional manifold, its inherent curvature in the $25$ dimensional state space makes it impossible to visualize it directly. %
Computing distances to the manifold (Appx. \ref{app:param}), %
however, we find that the oscillation at its lowest point is less sensitive to small perturbations than the cell's resting state, with a minimum distance of $d\approx 2.3$ to the basin boundary, compared to $d\approx 1.2$ for the fixed point. 
Many types of cortical cells exhibit this type of bistability between spiking activity and a stable equilibrium near the resting or a more depolarized potential \citep{izhikevich2007dynamical,durstewitz2007dynamical}, and this example demonstrates how our algorithm can be utilized to reveal the structure of the state space supporting this type of dynamics from real cells.

\section{Homo-/heteroclinic orbits and detection of chaos}
\label{sec:homo_hetero}
The stable and unstable manifolds can be used to identify homoclinic and heteroclinic orbits, as
defined in Sect. \ref{sec-math} (Def. \ref{def-hom}, \ref{def-het}). 
Intersections between the stable and unstable manifolds of a saddle $\vp$ lead to homoclinic points, or to heteroclinic points of two saddles $\vp \neq \vq$. 
The existence of such points inevitably gives rise to a complex fractal geometry (a so-scalled horseshoe %
structure, see Appx. \ref{app:G}) and thus chaos due to the Smale-Birkhoff Homoclinic Theorem \citep{wig,simpson2016unfolding}. Finding such intersections is therefore highly illuminating for determining the dynamical behavior of a system. This is illustrated for a simple $2$d PL map in Fig. \ref{fig_chaoticnew}A, which shows the homoclinic intersections identified by Algorithm \ref{alg:new_manifold}. For this $2$d case, we can in fact analytically determine the presence of homoclinic points, as worked out in Appx. \ref{app:analytic_algo}, the results of which agree with Algorithm \ref{alg:new_manifold}. 
Fig. \ref{fig_chaoticnew}B illustrates the whole resulting chaotic attractor, which happens to lie on the unstable manifold in this case. Fig. \ref{fig_chaoticnew}C provides the bifurcation diagram as one varies the model's bias term as a control parameter, and Fig. \ref{fig_chaoticnew}D the system's two Lyapunov exponents across the chaotic range of $h_1$ %
(confirming the presence of `robust chaos', as the Lyapunov exponents do not change across the chaotic regime \citep{ben}). 
Finally, in Appx. Fig. \ref{fig: ECG_homoclinic} we show the identification of a heteroclinic intersection by our algorithm in a high-dimensional empirical example, human electrocardiograms (ECG), thus confirming the presence of chaos in this signal.

\begin{figure*}[htbp]
\centering

\centering
\includegraphics[width=1\columnwidth]{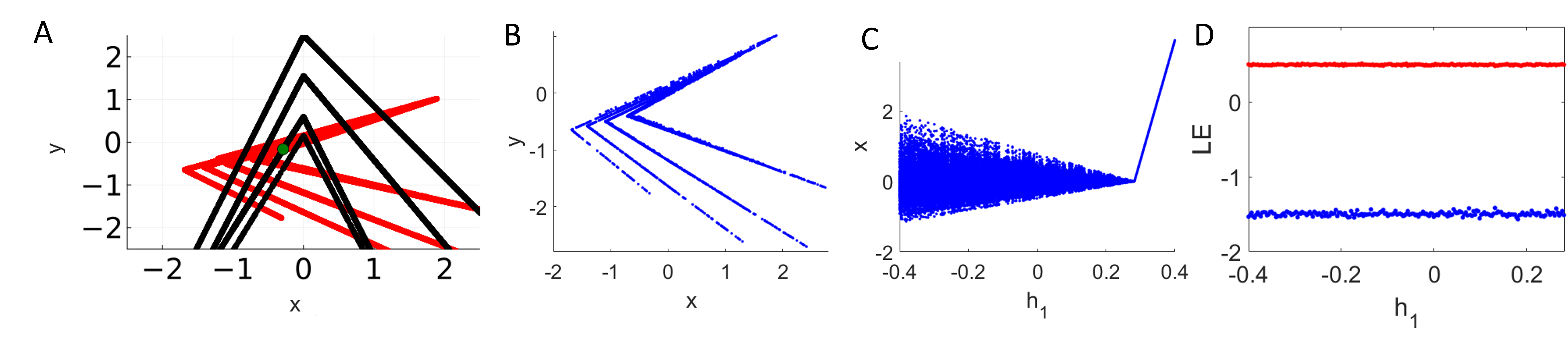}
 \vspace{-.9cm}
\caption{A) Stable (black) and unstable (red) manifolds of a saddle point (green dot) and their homoclinic intersections as identified by our algorithm.
B) Structure of the chaotic attractor caused by these homoclinic intersections. 
C) Bifurcation diagram as a function of bias parameter $h_1$. 
D) Lyapunov exponents across the $h_1$-range for which the chaotic attractor exists. 
}\label{fig_chaoticnew}
\end{figure*}

\section{Conclusions}
Here we presented a novel semi-analytical algorithm for determining stable and unstable manifolds of fixed and cyclic points in ReLU-based RNNs. Within each linear subregion, the algorithm provides an analytical construction for the manifold segments, and then recursively assembles these segments into the global un-/stable manifolds by forward- or backward-iterating the RNN map $F_{\mathbf{\theta}}$ across subregions.
These manifolds are profoundly important for studying an RNN's dynamical repertoire, illuminating dynamical mechanisms in an underlying system reconstructed by the RNN, or the mechanisms by which an RNN solves a given task. Stable manifolds of saddle points segregate the state space into different basins of attraction, giving rise to the computationally important property of multistability \citep{dur0, feu18}. Intersections of stable and unstable manifolds, in turn, lead to homo- or heteroclinic points which produce a fractal geometry and chaos \citep{wig}. 
While here we developed these algorithms for the class of PLRNNs \citep{brenner_tractable_2022,hess,brenner2024alrnn}, extensions to other PL systems, like SLDS \citep{linderman_bayesian_2017,linderman_recurrent_2016} or TLS \citep{parmelee2022core,morrison2016diversity,hahnloser2000permitted} are conceivable. SLDS, unlike PLRNNs, are not continuous across boundaries of linear regions, however, rendering them less suitable for DSR in general and making it more tricky to impose invertibility. TLNs, on the other hand, are defined in continuous rather than discrete time. Techniques for converting discrete into continuous time PLRNNs may thus be of help here \citep{mon2}. Sometimes restrictions are imposed on the weight matrices (like binary weights or mutual inhibition), leading to nice theoretical results \citep{curto2023graph,parmelee2022core,morrison2016diversity} but also curtailing the model's flexibility for approximating arbitrary DS.

\paragraph{Limitations}
Limitations arise in the context of chaotic dynamics, where invariant manifolds may fold into fractal structures. While points from these manifolds may still be sampled, the analytic construction through curved/planar segments spanned by support vectors breaks down, as the intricate, self-similar geometry of fractals cannot be captured this way. Nevertheless, we still may be able to retrieve some important dynamical characteristics by determining homo- or heteroclinic intersections, as discussed in \cref{sec:homo_hetero}. Another limitation is that in the worst case scenario the algorithm may scale as $2^P$ with the number of linear subregions $P$. However, as shown in \citep{brenner2024alrnn}, the number of subregions utilized by trained PLRNNs quickly saturates, suggesting an at most polynomial scaling if one restricts attention to the domain explored by the data (see also Fig. \ref{fig: runtime_P_const}). Finally, although not strictly part of the present algorithm, whether one can recover all relevant manifolds also depends one whether all saddle points were detected in the first place, of course. In any case, we emphasize that this is the \textit{first} algorithm for detecting un-/stable manifolds in ReLU-based RNNs.

\textbf{Code} is available at \url{https://github.com/DurstewitzLab/DetectingManifolds}.

\section{Acknowledgements}
This work was primarily supported by Samsung Advanced Institute of Technology, Samsung Electronics Co., Ltd. Additional funding was provided through the German Research Foundation (DFG) within the FOR-5159 research cluster (Du 354/14-1) and the collaborative research center TRR 265, subproject A06. Z.M. is also grateful to the Bundesministerium für Forschung, Technologie und Raumfahrt (BMFTR, Federal
Ministry of Research, Technology and Space) for funding through project OIDLITDSM, No. 01IS24061.
\bibliography{bib}
\bibliographystyle{plainnat}
\newpage
\onecolumn
\newpage
\appendix
\section*{Appendix:}\label{app}
\section{LLM usage and Code availability}
Code for the algorithms developed here and results produced is available at \url{https://github.com/DurstewitzLab/DetectingManifolds}. 
Large Language Models (LLMs) were used for implementation of standard routines and assistance with literature search.
\section{Derivation of Eq. \ref{eq:gen_soln_main}}
\label{App: degenerate Eigenvalues}
The stable and unstable manifolds of saddles are invariant sets with all trajectories with initial conditions on the manifold confined to the manifold for all $t$ \citep{alligood1996,per}, $F_{\mathbf{\theta}}(W^{\sigma}) \subseteq W^{\sigma}$. We recursively assemble these manifolds following \eqref{eq:manifolds}, with points sampled from $W^{\sigma}$ following \eqref{eq:gen_soln_main} within each linear subregion. Here we derive this equation starting with a linear system of the form $ \, \vz_{t+1} = \mL \vz_t $.
\\[1ex]
\textbf{Case (1)}: If the geometric multiplicity equals the algebraic multiplicity, then $\mL$ is diagonalizable and admits a basis of linearly independent eigenvectors $\vv_1, \cdots, \vv_n$. Thus, the orbits of the system can be expanded in terms of eigenvectors as 
 \begin{align}
  \vz_t = c_1 \lambda_1^t \vv_1 + c_2 \lambda_2^t \vv_2 + \cdots + c_n \lambda_n^t \vv_n.   
\end{align}
\textbf{Case (2)}: If the geometric multiplicity of $\lambda_i$ is less than the algebraic multiplicity, then the eigenvalue is said to be defective and $\mL$ admits a basis of generalized eigenvectors. In this case, $\mL$ has a Jordan block $\mJ_d (\lambda_i)$ of size $d> 1$
\begin{align*}
\mJ_d (\lambda_i) \, = \, \begin{pmatrix}
\lambda_i & 1 & 0 & \dots & 0 \\
0 & \lambda_i & 1 & \dots & 0 \\
0 & 0 & \lambda_i & \dots & 0 \\
\vdots & \vdots & \vdots & \ddots & 1 \\
0 & 0 & 0 & \dots & \lambda_i
\end{pmatrix},  
\end{align*}
with the $t$-th power 
\begin{align*}\label{}
\mJ^t_m (\lambda_i) &\, = \, 
\begin{pmatrix}
\lambda_i^{t} && {t \choose 1}\,\lambda_i^{{t}-1} && {t \choose 2}\,\lambda_i^{{t}-2} && \cdots && {t \choose d-1}\,\lambda_i^{{t}-d+1}\\[1ex]
0 && \lambda_i^{t} && {t \choose 1}\,\lambda_i^{{t}-1} && \cdots && {t \choose d-2}\,\lambda_i^{{t}-d+2}\\[1ex]
\vdots && \vdots && \ddots && \ddots && \vdots \\[1ex]
0 &&  0 && \hdots && \lambda_i^{t} &&  {t \choose 1}\,\lambda_i^{{t}-1}\\[1ex]
0 &&  0 && \hdots && 0 && \lambda_i^{t}
\end{pmatrix}.
\end{align*}
Moreover, a generalized eigenvector $\vw_d \neq 0$ of degree $d$, corresponding to the defective eigenvalue $\lambda_i$, satisfies
\begin{align}
(\mL - \lambda \mI)^d \vw_d = 0,  \hspace{.6cm} \text{but} \, \, \, 
(\mL - \lambda \mI)^{d-1} \vw_d \neq 0,
\end{align}
and $\mL$ has $d$ linearly independent generalized eigenvectors associated with $\lambda_i$. In fact, we can construct a chain of generalized eigenvectors $\{\vw_1, \cdots, \vw_d   \}$ such that
\begin{align}
 (\mL - \lambda \mI) \vw_d = \vw_{d-1}, \, \, (\mL - \lambda \mI) \vw_{d-1} = \vw_{d-2}, \, \, \cdots, \, \,  (\mL - \lambda \mI) \vw_2 = \vw_{1},     
\end{align}
where $\vw_1$ is a regular eigenvector. Given a chain of length $d$, the orbit contribution corresponding to the Jordan block $\mJ_d (\lambda_i)$ is given by 
\begin{align}
    \vz_t^{defective} \, = \, \lambda_i^t \Big( c_1 \vw_1 + c_2 \, t  \vw_2 +  c_3 
 \, \frac{t^2}{2!}\vw_3 + \dots + c_d  \, \frac{t^{d-1}}{(d-1)!} \vw_d \Big).    
\end{align}
The full orbit of the system is a linear combination of contributions from the non-defective eigenvalues $\lambda_j, \, 1 \leq j \neq i \leq n$, and the defective eigenvalue $\lambda_i$ according to 
\begin{align}\label{eq-lin}
\vz_t & = \sum_{\substack{j=1 \\ j \neq i}}^{n} c_j \lambda_j^t \vv_j + \lambda_i^t \sum_{r=1}^{d} c_r \frac{t^{r-1}}{(r-1)!} \vw_r.
\end{align}
This formula describes exponential evolution for all non-defective eigenvalues and polynomially modified exponentials for the defective eigenvalue $\lambda_i$. \\

A PLRNN is not strictly linear, but affine in each subregion, see \eqref{eq:PLRNN}. For an affine system of the form $\vz_{t+1} = \mL \vz_t + \vh$, \eqref{eq-lin} consists of a homogeneous part (determined by the eigenstructure of $\mL$) and a particular part due to the constant bias term $\vh$. In the presence of a defective eigenvalue $\lambda_i$ with Jordan block of size $d$, the full orbit is given by
\begin{align}
\vz_t 
&= \sum_{\substack{j=1 \\ j \neq i}}^{n} c_j \lambda_j^t \vv_j 
\;+\; 
\lambda_i^t \sum_{r=1}^{d} c_r \frac{t^{r-1}}{(r-1)!} \vw_r 
\;+\;
\sum_{k=0}^{t-1} \mL^k \vh.
\end{align}
The final term $\sum_{k=0}^{t-1} \mL^k \vh$ accounts for the cumulative effect of the bias, modifying the orbit away from purely exponential or polynomial-exponential behavior.

\section{Additional Details on Manifold Construction}
\label{sec: additional algorithm}
\begin{algorithm*}[!b]
\caption{Backtracking Time Series in a ReLU based RNNs}
\label{Alg:back}
\begin{algorithmic}[1]
   \State $z_{T} \gets \text{an initial State}$
   \State $\theta \gets \text{Parameters}$
    \State Initialize list: $S = [ z_T ]$
    \For {$t=T:1$} 
        \State $z_t = S[T-t]$
        \State $D_{t-1} \gets \text{diag}(z_{t} > 0)$\hfill $\triangleright$ Initialize D as a diagonal matrix
        \State $z^*_{t-1} \gets F^{-1}(\theta,D_{t-1},z_t)$ \hfill $\triangleright$ Perform a backward step
        \State $\tilde{z}_t \gets F(\theta,z^*_{t-1})$ \hfill $\triangleright$Perform a forward step
        \If{$\tilde{z}_t = z_t$} 
            \State $S \gets S \cup \{z^*_{t-1}\}$ \hfill $\triangleright$If forward step is correct
        \Else
            \State $D_{t-1} \gets \text{diag}(z^*_{t-1} > 0)$ \hfill $\triangleright$Update D with new candidate
             \State $z^*_{t-1} \gets  F^{-1}(\theta,D_{t-1},z_t)$ \hfill $\triangleright$ Retry backward step
            \State $\tilde{z}_t \gets F(\theta,z^*_{t-1})$ \hfill $\triangleright$ Retry forward step
            \If{$\tilde{z}_t = z_t$} 
                \State $S \gets S \cup \{z^*_{t-1}\}$ \hfill $\triangleright$ If forward step is correct
            \Else
                
            \State $\tilde{z}_t \gets $\text{TryPreviousRegions}($\theta, D\_pool, z_t, z^*_{t-1}, \tilde{z}_t$) \hfill $\triangleright$ Try previous regions
        
                \If{$\tilde{z}_t \neq z_t$}
                     \State  $\tilde{z}_t \gets$\text{TryBitflips}($\theta, z_t, z^*_{t-1}, \tilde{z}_t$) \hfill $\triangleright$ Hierarchically check neighbours
            
                    \If{$\tilde{z}_t = z_t$}
                        \State $S \gets S \cup \{z^*_{t-1}\}$
                    \Else
                        \State \textbf{return} 
                    \EndIf
                \EndIf
            \EndIf
        \EndIf
    \EndFor
    \State \textbf{return} trajectory
\end{algorithmic}
\end{algorithm*}

\paragraph{Backtracking algorithm and other PLRNN variants} 
Step 2 in sect. \ref{sec:manifold_constr} outlines the steps for inverting the RNN map $F_{\mathbf{\theta}}$ for PLRNNs of the form \eqref{eq:plrnn1}, spelled out in detail in Algorithm \ref{Alg:back}. This can easily be amended for other ReLU-based RNNs. For instance, for a 1-hidden layer PLRNN, called the shallow PLRNN (shPLRNN) \citep{hess}, given by
\begin{equation}
\vz_{t+1} = \mA \vz_t + \mW_1 \mD_t (\mW_2 \vz_t +\vh_2) + \vh_1
\end{equation}
where $\mA \in \mathbb{R} ^{M\times M}$, $\mW_1 \in \mathbb{R} ^{M\times H}$, $\mW_2 \in \mathbb{R} ^{H\times M}$, $\vh_2 \in \mathbb{R} ^{H}$, $\vh_1 \in \mathbb{R} ^{M}$, %
the inversion of this map yields
\begin{equation}
\vz_{t-1} = (\mA + \mW_1 \mD_{t-1} \mW_2)^{-1}(\vz_t - \mW_1 \mD_{t-1} \vh_2 - \vh_1)
\end{equation}
\paragraph{Manifold modeling} 
To generate initial conditions on the local manifold in Algorithm~\ref{alg:new_manifold}, we sample perturbations in the stable eigenspace around a base point $\bm{x}_b$ (equal to $\bm{p}$ in the initial linear subregion). To account for anisotropy in the vector field and potential undersampling of slow eigendirections, we first construct an orthonormal basis $E_{\text{ortho}}^\sigma$ via the QR decomposition ($E^\sigma = E^\sigma_{\text{ortho}} \, R$) which preserves the span of the stable eigenspace while avoiding numerical degeneracy. Transforming the linear operator into this basis yields the \emph{effective} eigenvalues
$\tilde{\lambda}_i 
    = \left(R \, \mathrm{diag}(\lambda_1,\ldots,\lambda_{d_s}) R^{-1}\right)_{ii},
$
which govern contraction rates along the orthonormalized directions. 
Because systems with disparate timescales typically exhibit large variability along fast eigendirections but very limited variability along slow ones, naive sampling leads to poorly conditioned point clouds. We therefore rescale perturbations inversely to the magnitude of the effective eigenvalues via $s_i = \frac{1}{|\tilde{\lambda}_i|^{\,c}}$, where the hyperparameter $c$ controls how strongly slow directions are emphasized. 
We then draw $N_s$ perturbation coefficients from a Gaussian mixture model (GMM), $ \bm{\alpha} \sim \frac{1}{3}\sum_{k=1}^3 
    \mathcal{N}\!\left(0, \, \sigma_k^2 I\right),\sigma_k \in \{0.01,\, 0.1,\, 0.5\},$ where the three scales $\sigma_k$ ensure multi-scale coverage of the local manifold. Eigenvalue-based rescaling is applied multiplicatively ($\tilde{\alpha}_i = s_i \, \alpha_i$). 
Finally, the sampled points on the manifold are generated by
$\bm{x} = \bm{x}_b + E^\sigma_{\text{ortho}} \bm{\tilde{\alpha}}$, and points violating ReLU-constraints (i.e., are inconsistent with the active set at $\bm{x}_b$) are rejected. This procedure yields a numerically well-conditioned set of initial points distributed across all stable eigendirections and across multiple sampling scales.

In the case of curved manifolds, we used kernel-PCA \citep{scholkopf1998nonlinear} for approximation in Algorithm \ref{alg:new_manifold}. Specifically, we used an RBF kernel, the squared exponential kernel $K(\vx,\vy)=\text{exp} ({-\frac{1}{2}\norm{\vx-\vy}^2})$, for its generally good approximation properties for smooth manifolds with curvature as described by \ref{eq:gen_soln_main} \citep{hofmann2008kernel}. 
Since $K(\vx,\vy)$ decays with Euclidean distance, kernel-PCA with an RBF kernel emphasizes local geometry, which is suitable for manifolds with curvature. Moreover, the RBF kernel is universal and capable of representing smooth curvature. While more specialized kernels (e.g., spectral–periodic or polynomial–exponential) may sound like a theoretically better match, they tend to require careful tuning of system-specific frequencies or decay rates. The RBF kernel, by contrast, offers a robust and broadly applicable alternative that performs reliably without region-dependent parameter adjustments.

\color{black}
\begin{algorithm}[H]
\begin{algorithmic}[1]
\Function{BackwardForward}{$\Theta, D, z$}
    \State $z^* \gets F^{-1}(\Theta,D,z)$
    \State $\tilde{z} \gets F(\Theta,z^*)$
    \State \Return $z^*, \tilde{z}$
\EndFunction

\Function{TryPreviousRegions}{$\Theta, D\_pool, z, z^*, \tilde{z}$}
    \For{$D \in D\_pool$}
        \State $z^*, \tilde{z} \gets \text{BackwardForward}(\Theta, D, z)$
        \If{$\tilde{z} = z$}
            \State \Return $z^*$
        \EndIf
    \EndFor
\EndFunction

\Function{TryBitflips}{$\Theta, z, z^*, \tilde{z}$}
    \For{$k=1:num\_relus$}
        \State $D\_versions \gets \text{generate\_bitflip\_k}()$
        \For{$D \in D\_versions$}
            \State $z^*, \tilde{z} \gets \text{BackwardForward}(\Theta, D, z)$
            \If{$\tilde{z} = z$}
                \State \Return$z^*$
            \EndIf
        \EndFor
    \EndFor
\EndFunction

\end{algorithmic}
\end{algorithm}

\paragraph{Fallback algorithm}
We are usually interested in the structure of the state space in the context of DS reconstruction where we train RNNs on time series observations from time-continuous DS. In these cases, Algorithm \ref{alg:new_manifold} is always applicable. More generally, however, maps may exhibit large jumps, with orbits ``erratically'' hopping around among subregions. In these cases the un-/stable manifolds may acquire a more complicated, non-smooth geometry (as in Fig. \ref{fig:2}B \& Fig. \ref{fig_chaoticnew}). 
Algorithm \ref{alg:fallback} formulates a `fallback' procedure for such cases that works by perturbing 
seed points along the analytically defined local manifold, and then iterates $F_{\mathbf{\theta}}$ to generate a larger set of support vectors. As manifolds can re-enter the same subregion in multiple folds, we apply HDBSCAN \citep{McInnes2017,ester1996density} to cluster support vectors into distinct segments. Although computationally more demanding, this fallback reliably captures manifolds with discontinuous or folding structures when sequential tracing as in Algorithm \ref{alg:new_manifold} fails.
\color{black}
\begin{algorithm}[h]
\caption{Finding stable/unstable manifolds: fallback algorithm}
\label{alg:fallback}
\begin{algorithmic}[1]
    \State $(\bm{p}, E) \gets \Call{SCYFI}{}$ \hfill $\triangleright$ $\bm{p}$: Fixed Point, E: Eigenvectors
    \State $S^{\{+1\}} \gets \emptyset$ \hfill $\triangleright$Stable Manifolds
    \State $S^{\{-1\}} \gets \emptyset$ \hfill $\triangleright$Unstable Manifolds
    \For{i=1:$N_1$ }
    \State $z_0 = \bm{p}$ \hfill $\triangleright$For $N_1$ different initialisations
    \For{$v^{\{-1\}} \in E^{\{-1\}}$}
        \State $z_0 \mathrel{+}= v^{\{-1\}} \cdot rand()$\hfill $\triangleright$Perturbe into subspace
    \EndFor
        \State $T^{\{-1\}} \gets \textit{GetForwardTS}(z_0 )$
        \State $S^{\{-1\}} \gets S^{\{-1\}} \cup \{ T^{\{-1\}} \}$
   
    \EndFor
     \For{i=1:$N_2$}
      \State $z_0 = \bm{p}$ \hfill $\triangleright$ For $N_2$ different initialisations
    \For{$v^{\{+1\}} \in E^{\{+1\}}$ }
        \State $z_0 \mathrel{+}= v^{\{+1\}} \cdot rand()$\hfill $\triangleright$ Perturb into subspace
    \EndFor
        \State $T^{\{+1\}} \gets \textit{}{GetBackwardTS}(z_0)$
        \State $S^{\{+1\}} \gets S^{\{+1\}} \cup \{ T^{\{+1\}} \}$
    \EndFor
    \State $\widetilde{S}^{\{+1\}} \gets \emptyset$\hfill \(\triangleright\) Piecewise linear manifold fits
    \State $\widetilde{S}^{\{-1\}} \gets \emptyset$
    \For{\textbf{each} $D \in D_{\Omega}$ } 
        \State $S^{\{+1\}}_\Omega \gets S^{\{+1\}} \cap D_{\Omega}$ \hfill \(\triangleright\) Go through all subregions
        \State $S^{\{-1\}}_\Omega \gets S^{\{-1\}} \cap D_{\Omega}$
        \State $(C^{\{+1\}}_{\Omega}, C^{\{-1\}}_{\Omega}) \gets \Call{Fit}{(S^{\{+1\}}_\Omega,\, S^{\{-1\}}_\Omega)}$
        \hfill \(\triangleright\) Cluster points and fit
        \State $\widetilde{S}^{\{+1\}} \gets \widetilde{S}^{\{+1\}} \cup \{C^{\{+1\}}_{\Omega}\}$
        \State $\widetilde{S}^{\{-1\}} \gets \widetilde{S}^{\{-1\}} \cup \{C^{\{-1\}}_{\Omega}\}$
    \EndFor

    \State \textbf{return} $(\widetilde{S}^{\{+1\}},\, \widetilde{S}^{\{-1\}})$ 
    \hfill \(\triangleright\) Piecewise linear manifolds

\end{algorithmic}
\end{algorithm}
\section{Alternative Construction of Manifolds}
\label{app:param}
Algorithm \ref{alg:new_manifold} constructs the local manifolds within each subregion using PCA or kernel-PCA. Here we provide an alternative parameterization that may be simpler to use for some purposes, like calculating distances on and to the manifold. %
First note that \eqref{eq:gen_soln_main} (cf. Appx.~\ref{App: degenerate Eigenvalues}) yields a $1$d curve per initial condition. An invariant $d$-dimensional manifold ($d>1$) is a smooth surface that contains infinitely many such curves. 
To describe the entire manifold, we need a parameterization using $d$ variables, rather than a family of $1$-dimensional curves. 
Let the state space be partitioned into finitely many subregions
$\{\mathcal S_k\}_{k=1}^K$ and 
\begin{equation}
F(\vx)=\mL_k \vx + \vm_k , \qquad \vx\in\mathcal S_k .
\end{equation}
Assume that there exists a $d$-dimensional invariant manifold
$\,  W^{\pm1} \,$ in $1< \mathbb{K}_I  \subseteq \Bqty{1, 2, \hdots, K}$ subregions of $F$, and for every such subregion
\begin{align}
W^{\pm1}_k :=  W^{\pm1}\cap \mathcal S_k \neq \emptyset, \hspace{1cm} k \in \mathbb{K}_I.
\end{align}
We focus on a single fixed point $\, \vp \,$ of $F$, and assume that this fixed point lies in a unique subregion $ \mathcal{S}_{k_0}$. The invariant manifold $\, W^{\pm1} $ is the un-/stable manifold associated with this fixed point. Accordingly, the local manifold segment used to initiate the construction is centered at this fixed point. The manifold segments $ W^{\pm1}_k, \, k \in \mathbb{K}_I\setminus\{ k_0\}$, in other subregions are obtained by propagating this local manifold under the dynamics of $F$.
Thus, in all other subregions the local manifold segments are generally not centered at a fixed point of the corresponding affine subsystem.

For each subregion $\mathcal S_k$, choose a reference point
\begin{equation}
\vx_k^\star \in W^{\pm1}_k,
\end{equation}
which is assumed to lie on the invariant manifold and serves as a local anchor of the manifold segment in subregion $k$. For instance, $\vx_k^\star$ can be chosen as a point where a single invariant
$1$d trajectory generated by \eqref{eq:gen_soln_main} intersects the subregion $\mathcal S_k$. In the unique subregion that contains the fixed point, $\vx_k^\star$ coincides with this fixed point.

Suppose that $T_k\in GL(n,\mathbb R)$ is any invertible linear change of coordinates whose first $d$ columns span the tangent space
$T_{x_k^\star} W^{\pm1}$ of the invariant manifold at $\vx_k^\star$. 
Denote by $\mathcal S_{k_0}$ the unique subregion that contains  
 the fixed point $\vp$ and 
set $\vx_{k_0}^\star=\vp$. At $\vp$, the tangent space of the $d$-dimensional invariant manifold $W^{\pm1}$ is given by the corresponding invariant eigenspace of $\mL_{k_0}$ (stable or unstable). Let
\begin{align}
\mV_{k_0}=\bigl[\vv^{(1)}_{k_0},\ldots,\vv^{(d)}_{k_0}\bigr]\in\mathbb R^{n\times d}
\end{align}
be a basis matrix of this eigenspace. For any subregion $\mathcal S_k$, let $\vx_k^\star\in W^{\pm1}_k$ be an anchor point on the invariant manifold. Assume that there exists a finite sequence of points
\begin{align}
\vx_0=\vp,\; \vx_1, \vx_2, \cdots, \, \, \vx_{N_k} = \vx_k^\star
\end{align}
such that
\begin{align}
\vx_{t+1}=F(\vx_t), \qquad t=0,\cdots,N_{k-1} ,
\end{align}
and $\vx_t\in W^{\pm1}$ for all $t$. In particular, such a sequence can be obtained by iterating a point on an invariant $1$-dimensional trajectory generated by \eqref{eq:gen_soln_main}, but the construction does not depend on how the points on $W^{\pm1}$ are generated, provided that they follow the dynamics. Then the tangent space of the invariant manifold at $\vx_k^\star$ is obtained by transporting the tangent space at $\vp$ along this trajectory
\begin{equation}
\mV_k
=
\mL_{r(N_k-1)}\cdots \mL_{r(1)}L_{r(0)}\,\mV_{k_0}.
\label{eq:Tk-construction}
\end{equation}
The columns of $\mV_k$ span $T_{\vx_k^\star} W^{\pm1}$. The matrix $\mT_k$ is obtained by taking the first $d$ columns equal to the columns of $\mV_k$ and completing them with any additional $(n-d)$ linearly independent vectors to form an invertible matrix.

A $d$-dimensional un-/stable manifold segment $W^{\pm1}_k$ is itself a $d$-dimensional smooth surface embedded in $\mathbb{R}^n$. The \emph{Implicit Function Theorem} guarantees that any $d$-dimensional smooth manifold that is transversal to a complementary $(n-d)$-dimensional subspace, can be locally expressed as a graph of a smooth function $h_k:\mathbb{R}^d\to\mathbb{R}^{n-d}$.

We introduce local coordinates $(\bm{u},\bm{v})$ by
\begin{equation}
\vx = \vx_k^\star + \mT_k
\begin{pmatrix}
\bm{u} \\ \bm{v}
\end{pmatrix},
\qquad
\bm{u}\in\mathbb R^d,\;\; \bm{v}\in\mathbb R^{n-d}.
\end{equation}
In these coordinates, the manifold segment in subregion $k$ is represented by a graph
\begin{equation}
W^{\pm1}_k
=
\left\{
\vx_k^\star + \mT_k
\begin{pmatrix}
\bm{u} \\ h_k(\bm{u})
\end{pmatrix}
:
\bm{u}\in U_k
\right\},
\label{eq:pwl-local-graph}
\end{equation}
with
\begin{equation}
h_k(0)=0,
\qquad
Dh_k(0)=0 .
\end{equation}
The condition $Dh_k(0)=0$ indicates that the chosen coordinates are tangential to the invariant manifold at the anchor point $\vx_k^\star$.\\[2ex]
For $\vx\in\mathcal S_k$ we have
\begin{equation}
\vx^+ = F(\vx)=\mL_k \vx + \vm_k .
\end{equation}
Using the coordinate change associated with subregion $k$, 
\begin{equation}
\vx = \vx_k^\star + \mT_k
\begin{pmatrix}
\bm{u}\\ \bm{v}
\end{pmatrix},
\end{equation}
the image of $(\bm{u},\bm{v})$ under the dynamics can be written as
\begin{equation}
\begin{pmatrix}
\bm{u}^+ \\ \bm{v}^+
\end{pmatrix}
=
\mT_k^{-1} \mL_k \mT_k
\begin{pmatrix}
\bm{u} \\ \bm{v}
\end{pmatrix}
+
\mT_k^{-1}(\mL_k \vx_k^\star + \vm_k - \vx_k^\star).
\label{eq:pwl-uv-map}
\end{equation}
We denote the block decomposition by 
\begin{equation}
\mT_k^{-1} \mL_k \mT_k
=
\begin{pmatrix}
\mA_k & \mB_k \\
\mC_k & \mD_k
\end{pmatrix},
\qquad
\mT_k^{-1}(\mL_k \vx_k^\star + \vm_k - \vx_k^\star)
=
\begin{pmatrix}
\bm{a}_k \\ \bm{b}_k
\end{pmatrix}.
\end{equation}
If both $\vx$ and $F(\vx)$ belong to the same subregion $\mathcal S_k$, then
points on the manifold satisfy 
\begin{equation}
\bm{v} = h_k(\bm{u}),
\qquad
\bm{v}^+ = h_k(\bm{u}^+).
\end{equation}
Substituting $\bm{v}=h_k(\bm{u})$ into \eqref{eq:pwl-uv-map} yields 
\begin{align}
\bm{u}^+ &= \mA_k \bm{u} + \mB_k h_k(\bm{u}) + \bm{a}_k, \\
\bm{v}^+ &= \mC_k \bm{u} + \mD_k h_k(\bm{u}) + \bm{b}_k .
\end{align}
Hence, the local invariance equation in subregion $k$ is given by 
\begin{equation}
h_k\!\left(\mA_k \bm{u} + \mB_k h_k(\bm{u}) + \bm{a}_k\right)
=
\mC_k \bm{u} + \mD_k h_k(\bm{u}) + \bm{b}_k .
\label{eq:pwl-invariance-same}
\end{equation}
The affine terms $\bm{a}_k$ and $\bm{b}_k$ appear only for $\vx_k^\star \neq p$, that is, when the anchor $\vx_k^\star$ is not a fixed point of the local affine map. 

Let $\vx\in W^{\pm1}_k\subset\mathcal S_k$ and 
\begin{equation}
F(\vx)\in\mathcal S_j , \qquad j\neq k .
\end{equation}
In subregion $j$ the manifold is represented by 
\begin{equation}
\vx = \vx_j^\star + \mT_j
\begin{pmatrix}
\bm{\tilde u} \\ h_j(\bm{\tilde u})
\end{pmatrix}.
\end{equation}
Invariance of the global manifold implies the existence of a local
reparameterization map 
\begin{equation}
\bm{\tilde u} = R_{k\to j}(\bm{u})
\end{equation}
such that 
\begin{equation}
\vx_j^\star
+
\mT_j
\begin{pmatrix}
R_{k\to j}(\bm{u}) \\
h_j(R_{k\to j}(\bm{u}))
\end{pmatrix}
=
\mL_k\!\left(
\vx_k^\star + \mT_k
\begin{pmatrix}
\bm{u} \\ h_k(\bm{u})
\end{pmatrix}
\right)
+
h_k .
\label{eq:pwl-transition}
\end{equation}
\eqref{eq:pwl-transition} couples the local graph parameterizations of neighboring subregions.%

Differentiating \eqref{eq:pwl-transition} at $\bm{u}=0$ and using $Dh_k(0)=0$, $Dh_j(0)=0$, yields 
\begin{equation}
\mT_j
\begin{pmatrix}
DR_{k\to j}(0) \\ 0
\end{pmatrix}
=
\mL_k\, \mT_k
\begin{pmatrix}
I_d \\ 0
\end{pmatrix}.
\label{eq:pwl-dimension}
\end{equation}
Since $T_k$ and $T_j$ are invertible, $DR_{k\to j}(0)$ has full rank $d$. Therefore, the intrinsic dimension of the invariant manifold is preserved when passing from subregion $k$ to subregion $j$, in agreement with the assumption that the manifold cannot change its dimensionality across subregions. 

\paragraph{Global manifold.} 
The invariant manifold of the piecewise-linear system is obtained as the
union of the locally parameterized segments 
\begin{equation}
W^{\pm1}
=
\bigcup_{k=1}^K
\left\{
\vx_k^\star + \mT_k
\begin{pmatrix}
\bm{u} \\ h_k(\bm{u})
\end{pmatrix}
:
\bm{u}\in U_k
\right\},
\end{equation}
with consistency between neighbouring subregions enforced by \eqref{eq:pwl-transition}.

\paragraph{How to compute $h_k$ in practice}
We outline the standard series method which produces a polynomial approximation of $h_k(\bm{u})$ for each subregion $\mathcal{S}_k$. This method works for either stable or unstable manifolds by choosing the corresponding tangent space basis $\mT_k$ and block matrices $(\mA_{kj}, \mB_{kj}, \mC_{kj}, \mD_{kj})$, which link region $k$ to its neighboring region $j = r(k)$ following the manifold flow. The affine terms $\bm{a}_k, \bm{b}_k$, appear when the anchor point $\vx_k^*$ is not a fixed point of the local affine map.\\

\textbf{Step 1:} In each subregion $\mathcal{S}_k$, develop $h_k(\bm{u})$ as a multivariate Taylor series about $\bm{u}=0$ (no linear term because of tangency to the first $d$ columns of $\mT_k$ at the anchor point $\vx_k^*$): 
%
\begin{equation}
h_k(\bm{u}) \;=\; \sum_{|\alpha|\ge 2} h_{k,\alpha}\, \bm{u}^\alpha,
\end{equation}
where $\alpha\in\mathbb{N}^m$ is a multi-index and $\bm{u}^\alpha=\prod_{i=1}^m \bm{u}_i^{\alpha_i}$.\\

\textbf{Step 2:} Substitute the series into the invariance equation linking region $k$ to its image region $j$:
\begin{equation}
h_j\!\left(\mA_{kj} \bm{u} + \mB_{kj} h_k(\bm{u}) + \bm{a}_k\right)
=
\mC_{kj} \bm{u} + \mD_{kj} h_k(\bm{u}) + \bm{b}_k .
\label{}
\end{equation}
Expand both sides into multivariate Taylor series in $\bm{u}$. For the left-hand side, first expand the argument $U = \mA_{kj} \bm{u} + \mB_{kj} h_k(\bm{u}) + \bm{a}_k$, then expand $h_j(U)$ using its own series expansion about $U=0$. Equate coefficients of monomials $\bm{u}^\alpha$ of homogeneous degree $|\alpha| = \ell$ to obtain equations for the unknown tensors $h_{k,\alpha}$ and $h_{j,\alpha}$.\\

\textbf{Step 3:} Solve order by order:
\begin{itemize}
\item \textbf{For order 1:} The linear terms vanish automatically due to $Dh_k(0)=0$, $Dh_j(0)=0$, and condition $\mC_{kj}=0$ which must hold for the manifold to exist as a graph. This serves as a consistency check.
\item \textbf{For order 2:} For $|\alpha|=2$, the left-hand side contributes $h_j^{(2)}(\mA_{kj}\bm{u})^{\otimes 2}$ from the quadratic term of $h_j$, plus terms involving $\bm{a}_k$ and $\mB_{kj}h_k(\bm{u})$. The right-hand side contributes $\mD_{kj}h_k(\bm{u})$. This yields a linear relation:
\begin{align}
\frac{1}{2} \tH_j^{(2)} (\mA_{kj} \bm{u})^{\otimes 2} = \mD_{kj} \left( \frac{1}{2} \tH_k^{(2)} \bm{u}^{\otimes 2} \right) + \text{(terms from } \bm{a}_k \text{)}
\end{align}
where $\tH_k^{(2)}$ and $\tH_j^{(2)}$ are the quadratic coefficient tensors. 
Here, $(\mA_{kj}\bm{u})^{\otimes 2}$ denotes the second tensor (Kronecker) power of the vector $\mA_{kj}\bm{u}$, and $ \tH_j^{(2)} (\mA_{kj}\bm{u})^{\otimes 2} = \tH_j^{(2)}(\mA_{kj}\bm{u},\,\mA_{kj}\bm{u})$ is the evaluation of the quadratic coefficient tensor $\tH_j^{(2)}$ (i.e., the Hessian of $h_j$ at $\bm{u}=0$) on the pair of vectors $(\mA_{kj}\bm{u},\,\mA_{kj}\bm{u})$; in essence, this term `measures' how the curvature of the manifold in region $j$ acts on the direction obtained by mapping $\bm{u}$ from region $k$ into region $j$ through the linear map $\mA_{kj}$.
\item \textbf{For orders $\ell \ge 3$:} The equation for $h_{k,\alpha}$ involves:
\begin{itemize}
\item Known lower-order coefficients of $h_k$ (from $\mB_{kj}h_k(\bm{u})$ expanding the argument)
\item Known coefficients of $h_j$ up to order $\ell-1$ (from expanding $h_j$ about $U=0$)
\item The affine offset $\bm{a}_k$, which shifts the expansion point and creates additional lower-order contributions
\end{itemize}
The unknown $h_{k,\alpha}$ appears linearly, multiplied by $D_{kj}$, yielding a linear system:
\begin{align}
\mathcal{L}_{kj}(h_{k,\alpha}) = \text{RHS}_{\alpha}(\text{known lower-order terms}),
\end{align}
where $\mathcal{L}_{kj}$ is a linear operator determined by $\mD_{kj}$ and the eigenvalues of $\mA_{kj}$.
\end{itemize}

\textbf{Step 4:} For each degree $\ell \ge 2$, solve the linear homological system for the coefficients $h_{k,\alpha}$. Non-resonance conditions between the eigenvalues of $\mA_{kj}$ and $\mD_{kj}$ ensure invertibility of the linear operator at each step. Resonances require special treatment (free parameters or solvability conditions), indicating that the manifold may not be unique or may require higher-order analysis.

\textbf{Step 5:} Repeat the process for all regions $k$ visited by the manifold, ensuring consistency at region boundaries. The final result is a piecewise polynomial representation
\begin{align}
W^{\pm1}_k = \left\{ \vx_k^* + \mT_k \begin{pmatrix} \bm{u} \\ h_k(\bm{u}) \end{pmatrix} : \bm{u} \in U_k \subset \mathbb{R}^d \right\}
\end{align}
for each subregion $\mathcal{S}_k$, with the $h_k$ given by the computed Taylor coefficients.

\section{Details on RNN Training}
\label{Method details}

For DS reconstruction on the simulated and empirical benchmarks (Figs. \ref{fig: 1} \& \ref{fig:3}), we used the shPLRNN \citep{hess} and ALRNN \citep{brenner2024alrnn} trained by sparse teacher forcing \citep{mikhaeil2021difficulty}, a control-theoretically motivated training method for ensuring that reconstructed systems exhibit the same long-term statistics (cf. Appx. \ref{sec:Dstsp}) as the underlying, data-producing DS. We use the standard MSE loss 
\begin{equation}
    \ell_{\mathrm{MSE}}(\hat{\bm{X}}, \bm{X})
= \frac{1}{M \cdot T} \sum_{t=1}^{T} \lVert \hat{\bm{x}}_t - \bm{x}_t \rVert_2^{2},
\end{equation}
to which the regularization \eqref{eq:Regularization} was added, where $\bm{X}=\{\vx_{1:T}\}$ is the training sequence of length $T$ and $\hat{\bm{X}}$ are the corresponding model predictions, obtained from the latent states through some observation function, $\hat{\bm{x}}_t=g(\bm{z}_t)$, simply taken to be the identity in the settings considered here. Sparse teacher forcing replaces latent states $\bm{z}_t$ by data-inferred states $\hat{\bm{z}}_t=g^{-1}(\bm{x}_t)$ every $\tau$ time steps via decoder model inversion. 
\color{black}
Training and experiments were performed on a single CPU (Intel Xeon Gold 6132 with 512GB RAM or Intel Xeon Gold 6248 with 832GB RAM), with parameter configurations specified in Table \ref{tab:experiment-configs}. The parameter settings are based on \citep{brenner2024alrnn}, refined by grid search.
\begin{table}[htbp]
\centering
\scriptsize
\begin{tabular}{l c c c c c c c}
\hline
\textbf{Parameter} & \textbf{Lorenz63} & \textbf{Oscillator} & \textbf{Lorenz63} & \textbf{Duffing} & \textbf{Decision making} & \textbf{Lorenz63} & \textbf{Empirical} \\
\textbf{} & \textbf{Fig. \ref{fig: 1}A} & \textbf{Fig. \ref{fig: 1}B} & \textbf{Fig. \ref{fig: 1}C} & \textbf{Fig. \ref{fig:3}A} & \textbf{Fig. \ref{fig:3}B} & \textbf{Fig. \ref{fig:3}C} & \textbf{Fig. \ref{fig:3}D} \\
\hline
Model & ALRNN & ALRNN & ALRNN & shallowPLRNN & ALRNN & shPLRNN & ALRNN \\
M & 10/20/30/50 & 40 & 30 & 2 & 15 & 3 & 25 \\
Hidden dim & - & - & - & 10 & - & 20 & - \\
\#ReLUs & M - 3 & 15 & 8 & - & 6 & - & 6 \\
Sequence length & 200 & 25 & 100 & 100 & 100 & 100 & 200 \\
Gaussian noise & 0.0 & 0.0 & 0.0 & 0.0 & 0.01 & 0.05 & 0.02 \\
$\lambda_\text{invert}$
& 0.0/0.1$\cdot$exp(M) & 0.0/1e15 & 0.0/1e10 & 0.0 & 0.2 & 0.0 & 0.3 \\
Batch Size & 16 & 16 & 16 & 32 & 16 & 16 & 16 \\
Epochs & 10000 & 1000 & 1000 & 10000 & 20000 & 1000 & 2000 \\
Learning rate & 0.001 & 0.001 & 0.005 & 0.001 & 0.005 & 0.005 & 0.004 \\
$\tau$ & 16 & 10 & 15 & 15 & 15 & 15 & 20 \\
\end{tabular}
\caption{Parameter configurations. $\tau$ = teacher forcing interval.}
\label{tab:experiment-configs}
\end{table}

\section{DS Reconstruction Measures}
\label{sec:Dstsp}

\paragraph{Measure for Geometrical Agreement (\(D_{\text{stsp}}\))}

Given  probability distributions \( p(\vx) \) across ground truth trajectories and \( q(\vx) \) across model-generated trajectories, $D_{\text{stsp}}$ is defined as the Kullback-Leibler (KL) divergence

\begin{equation}
D_{\text{stsp}} := D_{\text{KL}}(p(\vx) \parallel q(\vx)) = \int_{\mathbf{\vx} \in \mathbb{R}^N} p(\vx) \log \frac{p(\vx)}{q(\vx)} \, \dd \vx
\end{equation}

For low-dimensional observation spaces, \( p(\vx) \) and \( q(\vx) \) can be estimated using binning \citep{koppe2019recurrent,brenner_tractable_2022}, yielding the discrete approximation 

\begin{equation}
D_{\text{stsp}} = D_{\text{KL}}(\hat{p}(\vx) \parallel \hat{q}(\vx)) \approx \sum_{k=1}^K \hat{p}_k(\vx) \log \frac{\hat{p}_k(\vx)}{\hat{q}_k(\vx)}.
\end{equation}

\( K = m^N \) is the total number of bins, with \( m \) bins per dimension. \( \hat{p}_k(\vx) \) and \( \hat{q}_k(\vx) \) are the normalized counts in bin \( k \) for ground truth and model-generated orbits, respectively. Here we used $m=30$ bins per dimension
\newline
\paragraph{Prediction error (PE)}
The $n$-step prediction error is defined as the mean squared error between ground truth data $\Bqty{\vx_t}$ and $n$-step ahead predictions of the model $\Bqty{\hat{\vx}_t}$, i.e.
\begin{align}
    \textrm{PE}(n) = \frac{1}{N(T-n)} \sum_{t=1}^{T-n} \norm{\vx_{t+n} - \hat{\vx}_{t+n}}_2^2 . 
\end{align}

\section{Manifold Quality Assessment}
\label{app:man}
Fig.\ref{fig:quality_assessment} illustrates the sampling process used for assessing the manifold measure (left column), \eqref{eq:mf_metric}, and the distributions of this measure (right column) when we sample points $\bm{x}_0$ from the manifold of a saddle $\bm{p}$, $\bm{x}_0 \in W^{\sigma}(\bm{p}) \cap \mathcal{U}$, vs. generally from some neighborhood $\bm{x}_0 \in \mathcal{U}$ as illustrated on the left. Table \ref{tab:mean_median_comparison} furthermore reports for each of the examples used in the main paper the median $\tilde{\delta}_{\sigma}$ across values on and off the manifold (usually in tight agreement with the respective means), as well the average index $\langle I_\mathcal{U} \rangle$ for points sampled from $\mathcal{U}$, and the final quality statistic $\Delta_{\sigma}$ as defined in the main text. 

We note that sometimes numerical issues can occur with applying \eqref{eq:mf_metric}, as the recursive iteration of $F_{\boldsymbol{\theta}}^k$ for large $k$ will exponentially diverge along unstable directions (which we always have for a saddle in both forward and backward time!) if $\bm{x}_{t+k}$ gets off track even so slightly. Eventually this will almost always be the case for $k \rightarrow \infty$, and hence the issue becomes particularly severe for stable eigendirections with absolute eigenvalues close to $1$, since the slow movement in this case requires many iterations (during which errors accrue) to reach the saddle (as was the case for the empirical example in Fig. \ref{fig:3}D). Another potential issue is that even points on the unstable manifold may temporarily move close to the saddle under forward iteration of the map; since we are looking for the least distance to the fixed point $\bm{p}$, this could further blur the differences between $\delta_{\sigma}$ measures of points starting on vs. off the manifold. A partial remedy might be to replace $\delta_{\sigma}^\text{max}$ below \eqref{eq:mf_metric} by a `softer' threshold, like the $99^\text{th}$ percentile of the $\delta_{\sigma}$ distribution, which is a bit more lenient by allowing for some distributional outliers in the computation of $\Delta_\sigma$.

\begin{figure}[ht!]
\centering
    \includegraphics[width=0.73\linewidth]{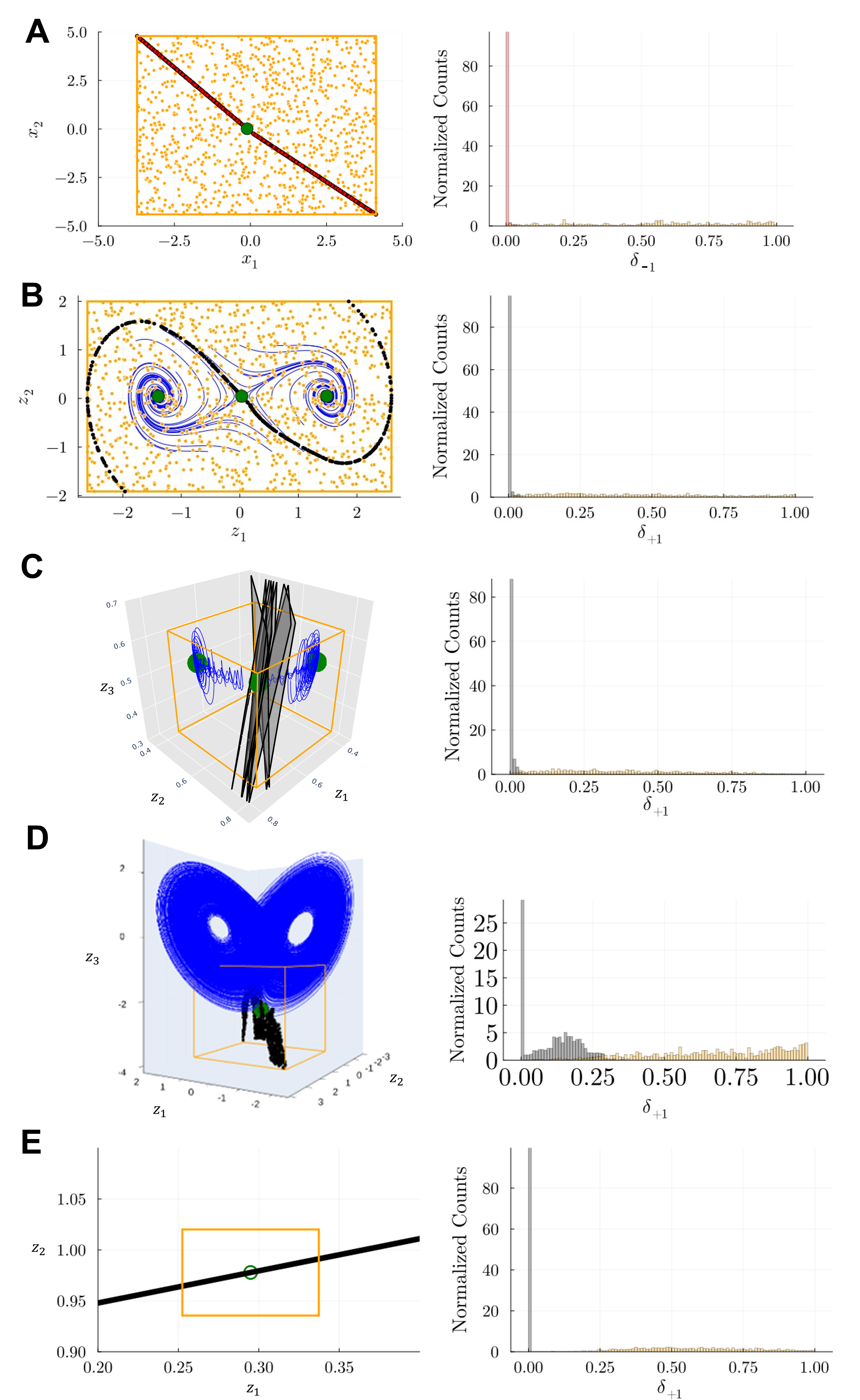}
    \caption{A) Left: Illustration of the sampling for Fig. \ref{fig:2}A (left), with $1000$ samples (red dots) drawn from the unstable manifold (reddish) and $1000$ points (orange dots) drawn randomly from a box $\mathcal{U}$ as illustrated. Right: Histograms of $\delta_{\sigma}$ for $\bm{x}_0 \in W^{\{-1\}}(\bm{p}) \cap \mathcal{U}$ (red) and $\bm{x}_0 \in \mathcal{U}$ (orange).
    B) Left: Same as A for the bistable Duffing system from Fig. \ref{fig:3}A, with points (black) sampled from the stable manifold (gray) vs. from an environment $\mathcal{U}$ (orange). Right: Histograms as in A.
    C) Left: Same as A for the choice model from Fig. \ref{fig:3}B, with points (black) sampled from the stable manifold (gray) vs. from an environment $\mathcal{U}$ (orange). Right: Histograms as in A.
    D) Left: Same as A for the Loren63 system from Fig. \ref{fig:3}C, with points (black) sampled from the stable manifold (gray) vs. from an environment $\mathcal{U}$ (orange). Right: Histograms as in A. 
    E) Left: Same as A for the membrane potential recordings from Fig. \ref{fig:3}D, with points (black) sampled from the stable manifold (gray) vs. from an environment $\mathcal{U}$ (orange). Note that this is a $2d$ projection of a 25-dimensional system, such that the orange box in this case only gives a rough idea of the sampling region surrounding the stable manifold. Right: Histograms as in A.}
    \label{fig:quality_assessment}
\end{figure}

\label{app:manifoldquality}
\begin{table}[h!]
\centering
\caption{Manifold Reconstruction Quality Statistics}
\label{tab:mean_median_comparison}
\begin{tabular}{lccccc}
\toprule
& \multicolumn{1}{c}{\textbf{On Manifold}} & \multicolumn{1}{c}{\textbf{Randomly}} \\
\cmidrule(lr){2-3} \cmidrule(lr){4-5}
\textbf{Example} &  \textbf{Median} $\Tilde{\delta}_{\sigma}$ & \textbf{Median} $\Tilde{\delta}_{\sigma}$ & $\langle I_\mathcal{U} \rangle$ & $\Delta_{+1}$ \\
\midrule
 Figure \ref{fig:2}A (left): unstable: & $1.6e-6 \pm 1.6e-6$ &   $0.97 \pm 0.026$ & 0.98 & 0.98\\
Figure \ref{fig:3}A: Duffing & $0.0066 \pm 0.002$ &  $0.46 \pm 0.25$ & 0.97 & 0.97 \\
Figure \ref{fig:3}B: Decision making task &  $0.00011 \pm 4.3e-52$  & $0.36 \pm 0.18$ & 0.95 & 0.95 \\
Figure \ref{fig:3}C: Lorenz63 & $0.12 \pm 0.08$   & $0.8 \pm 0.2$ & 0.90 & 0.78 \\
Figure \ref{fig:3}D: Cortical neuron & $6.5e-9\pm 4.2e-9$ &  $0.59 \pm 0.16$  & $0.76$ & $0.76$  \\
\bottomrule
\end{tabular}
\end{table}

\newpage
\color{black}
\section{Benchmark Systems for Testing}
\subsection{Analytical test cases}
\label{app:analytical_test}
For our analytical test cases we chose example $2d$ PL maps from \cite{gardini2009connection}, which we can reformulate precisely as a PLRNN, such that a direct comparison with the ground truth is possible (i.e., without inference of the underlying DS as in the numerical examples from Fig. \ref{fig:3}).
In \cite{gardini2009connection}, the PL map considered has the $2d$ form
\begin{equation}
\label{eq-PL2d}
F(X) = 
\begin{cases} 
\boldsymbol{A_l} \cdot X + \boldsymbol{B}, & \text{if } x \leq 0, \\
\boldsymbol{A_r} \cdot X + \boldsymbol{B}, & \text{if } x \geq 0,
\end{cases}
\end{equation}
with \( X = \begin{pmatrix} x \\ y \end{pmatrix} \) the two-coordinate state vector, and the transformation matrices and bias vector given by
\[
\boldsymbol{A_l} = \begin{pmatrix} \tau_l & c \\ -\delta_l & d \end{pmatrix}, \quad
\boldsymbol{A_r} = \begin{pmatrix} \tau_r & c \\ -\delta_r & d \end{pmatrix}, \quad
\boldsymbol{B} = \begin{pmatrix} h_1 \\ h_2 \end{pmatrix}
\]

This map can be reformulated as a PLRNN (\eqref{eq:PLRNN}) by defining the parameters $\mA$,$\mW$,$\vh$ as:
\[
\mA = \begin{pmatrix} \tau_l & c \\ -\delta_l & d \end{pmatrix}, \quad \mW = \begin{pmatrix} \tau_r - \tau_l & 0 \\ -\delta_r + \delta_l & 0 \end{pmatrix}, \quad \vh = \begin{pmatrix} h_1 \\ h_2 \end{pmatrix}
\]

For the analytical test case Fig. \ref{fig:2}A, we chose parameters:
(left) $
\tau_r =1.2$, $\delta_r = 1.8$, $\tau_l = -0.3$, $\delta_l = 0.9$,  $c = -1.5$,  $d = 1.0$ ,$h_2 = -0.1$,  $h_1 = -0.13$, 
(right) 
$\tau_r = 1.26$, $\delta_r = 0.71$, $\tau_l = -0.39$, $\delta_l = -0.91$,  $c = -0.44$,  $d = 0.56$,  $h_2 = 0.62$,  $h_1 = -0.28$.

For the test case in Fig. \ref{fig:2}B we adapt parameters from Figs. 3 \& 4 in \cite{gardini2009connection} as follows: (left) $\tau_r = -1.85$, $\delta_r = 0.9$, $\tau_l = -0.3$, $\delta_l = 0.9$,  $c = 1$,  $d = 0$,  $h_2 = 0$,  $h_1 = -1$, (right) $
\tau_r = -1.85$, $\delta_r = 0.9$, $\tau_l = 0.9$, $\delta_l = 0.9$,  $c = 1$,  $d = 0$,  $h_2 = 0$,  $h_1 = -1$

For Fig. \ref{fig_chaoticnew} we considered the following setting: $\tau_r=1.5$,  $\delta_r=0.75$, $\tau_l=-1.77$, $\delta_l=0.9$, $c=0.6$, $d=0.15$, $h_2=-0.4$, $h_1=-0.7$.   
\subsection{Recursive algorithm for determining homoclinic intersections}
\label{app:analytic_algo}
\normalsize
Here we introduce another algorithm for finding homoclinic intersections in low-dimensional settings. We use this to validate our main algorithm in sect. \ref{sec:homo_hetero}. 
Specifically, to investigate the existence of homoclinic intersections for the map \eqref{eq-PL2d}, we use an algorithm similar to the one proposed in \citep{roy}. As illustrated in Fig. \ref{fig_homo}, it is based on a recursive procedure as follows:
\begin{itemize}
    \item[1.] Let $\, \mathcal{O}^{*}_{\mathcal{L}}=(x^{*}_{\mathcal{L}}, y^{*}_{\mathcal{L}})\tran\, \in \mathcal{L}\,$ be an admissible saddle fixed point, and $\lambda_{s}$, $\lambda_{u}$ stable and unstable eigenvalues of $\mA_{\mathcal{L}}$. Assume that $\boldsymbol\ell^{*}_{s}$ and $\boldsymbol\ell^{*}_{u}$ are the stable and unstable eigenlines generated by the associated stable and unstable eigenvectors, respectively. Analogous to Sect. \ref{sub-anal}, suppose that $\boldsymbol\ell^{*}_{u}$ hits the border at $P_0=(0, y_0)\tran \in \Sigma$ where $y_0$ is given by \eqref{eq-y0}.
\begin{figure}[hbt]
\centering
\subfigure{\includegraphics[scale=0.48]{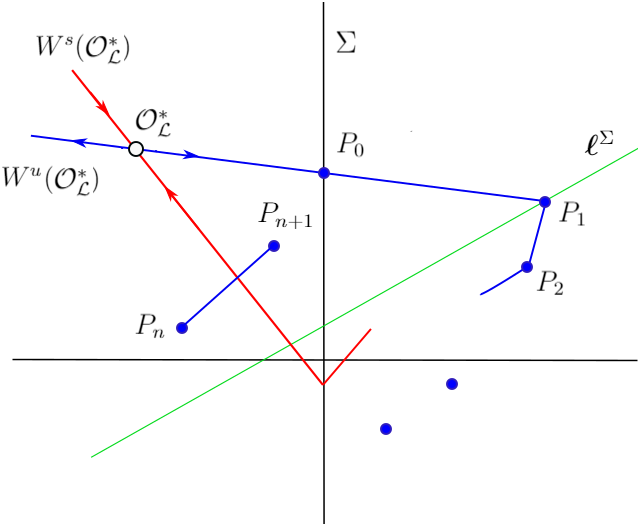}}
\caption{Schematic diagram to illustrate the procedure for finding homoclinic intersections. }\label{fig_homo}
\end{figure}
    \item[2.] 
    Consider the image of $P_0$ and assume this point is on the right side of the border, i.e. $P_1 \in \mathcal{S}$. Since this point has the coordinate $P_1 \, = \, (c\, y_0+h_1, d\, y_0+h_2)\tran $, it follows that 
    \begin{align}\nonumber
     \hspace{.3cm} & c\, y_0+h_1 
     \\[1ex]\nonumber
      &  \hspace{.3cm} \, = \, c\,\frac{\big(b_l\,h_1 + (1-a_l)h_2\big)(\lambda_{u}-d)- b_l\big(c\,h_2 + (1 - d)h_1\big) }{(\lambda_{u}-d)(1-\Gamma_{\mathcal{L}}+D_{\mathcal{L}})}
      \\[1ex]\label{con-p1}
      &
      \hspace{1cm}\, + \, h_1 >  0.   
    \end{align}
    Moreover $P_1 \in \boldsymbol\ell^{\Sigma}$ is the first fold point of the unstable manifold of $\,\mathcal{O}^{*}_{\mathcal{L}}$, and so all its images will also be fold points. \\[1ex]
    \item[3.] Suppose that the orbit starting from $P_0$ will return to the border again. Let $n$ be the the border return time defined as the minimum number of iterations needed for $P_0$ to cross the border and return to the left side again, i.e., such that all iterations $P_1, P_2, \cdots, P_{n-1}$ lie on the right hand side while $P_n \in \mathcal{L}$. Using a recursive method similar to the one proposed in \citep{roy}, we can compute the $n$-th iteration, $P_n$, directly from $P_0$. That is, there is no need to compute any other previous iterations $P_1, P_2, \cdots, P_{n-1}$. For this purpose, first we calculate $P_n$ as 
\begin{align}\label{Pn}
P_n\, = \, T^{\,n}(P_0)\, = \, \mA_{\mathcal{R}}^n\,P_0 \, + \, \big( \mA_{\mathcal{R}}-\mI\big)^{-1} \big( \mA_{\mathcal{R}}^n-\mI\big) \vh.   
\end{align}
Suppose that $\mA_{\mathcal{R}}$ has two distinct eigenvalues, then, according to Proposition \ref{pro-1}, the matrix $\mA_{\mathcal{R}}^n$ has the form \eqref{M-n}. Hence, for $P_0 \, = \, (0, y_0)\tran$ we have 
\begingroup
\allowdisplaybreaks
\begin{align}\nonumber
& P_n  \, = \,  
\begin{pmatrix}
A_{n+1} -d\,A_n& c\, A_n \\[1ex]
b_r A_n &  d\,A_n - \mathcal{D}_{\mathcal{R}}\, A_{n-1}
\end{pmatrix}
\begin{pmatrix}
0\\[1ex]
y_0
\end{pmatrix}
 \, + \, \frac{-1}{\mathcal{P}_{\mathcal{R}}(1)} \times
\\[2ex]\nonumber
 & 
\begin{pmatrix}
\hspace{-.05cm}
1-d& c \\[1ex]
\hspace{-.05cm}
b_r &  1-a_r
\end{pmatrix}
\hspace{-.16cm}
\begin{pmatrix}
\hspace{-.07cm}
A_{n+1} - d A_n -1 & \hspace{-.2cm} c A_n \\[1ex]
\hspace{-.07cm}
b_r A_n & \hspace{-.2cm}  d A_n - \mathcal{D}_{\mathcal{R}} A_{n-1}-1
\end{pmatrix}
\hspace{-.16cm}
\begin{pmatrix}
h_1\\[1ex]
h_2
\end{pmatrix} \hspace{-.08cm} =
\\[2ex]\label{Pn-cal}
 & 
\begin{pmatrix}
\hspace{-.09cm}
c A_n \,y_0 - \frac{\Big((1-d)\big[ A_{n+1} -d A_n-1\big]+ c b_r A_n \Big) h_1 + c \big[A_n-\mathcal{D}_{\mathcal{R}} A_{n-1} -1\big] h_2}{\mathcal{P}_{\mathcal{R}}(1)}\\[2ex]
\hspace{-.09cm}
\big( d A_n - \mathcal{D}_{\mathcal{R}} A_{n-1}\big)y_0 \, - \, \frac{b_r\big[ A_n (1-\Gamma_{\mathcal{R}}) + A_{n+1} -1 \big]h_1+b^{*}\,h_2}{\mathcal{P}_{\mathcal{R}}(1)} 
\end{pmatrix}
\end{align}
\endgroup
~\\
where $A_n$ is given by \eqref{An}; and $\Gamma_{\mathcal{R}}, \mathcal{D}_{\mathcal{R}}$, $\mathcal{P}_{\mathcal{R}}$ are the trace, determinant and characteristic polynomial of $\mA_{\mathcal{R}}$ respectively; and
\begin{align}\label{}
& b^{*}\, = \, (a_r-1)\big( 1 + \mathcal{D}_{\mathcal{R}} A_{n-1}\big) + \big(d-\mathcal{D}_{\mathcal{R}} \big)A_n.
\end{align}
Now, by \eqref{Pn-cal} we can compute all iterations 
and thus find the border return time $n$ for which the unstable manifold passes the border again.
~\\
\item[4.] Finally, we calculate $P_{n+1}$ and check whether or not the points $P_{n+1}$ and $P_n$ are on opposite sides of the stable eigenline $\boldsymbol\ell^{*}_{s}$, i.e. whether we obtain a homoclinic intersection. For this, let $P_{n}\,=\, (x_n,y_n)\tran$, $P_{n+1}\,=\, (x_{n+1},y_{n+1})\tran$, and consider $\mathsf{L}(x,y)$ given by \eqref{eq-L}. If $\,\mathsf{L}(x_n,y_n)  \cdot  \mathsf{L}(x_{n+1},y_{n+1}) \, < \,  0$, then $P_{n}$ and $P_{n+1}$ are on opposite sides of the stable manifold, while $\,\mathsf{L}(x_n,y_n)  \cdot  \mathsf{L}(x_{n+1},y_{n+1}) \, = \,  0$ implies at least one of the points $P_{n}$ or $P_{n+1}$ lies exactly on the stable manifold. In both cases there exists a homoclinic intersection.  
\end{itemize}

\begin{algorithm}[H]
\caption{Investigating Homoclinic Intersections in 2D}
\label{alg:homoclinic}
\begin{algorithmic}[1]
\Procedure{InvestigateHomoclinicIntersections}{}
    \State $\mathcal{O}^{*}_{\mathcal{L}} \gets (x^{*}_{\mathcal{L}}, y^{*}_{\mathcal{L}})\tran$ \Comment{Admissible saddle fixed point}
    \State $(\lambda_s, \lambda_u) \gets (\text{Stable Eigenvalue}, \text{Unstable Eigenvalue})$
    \State $(\boldsymbol\ell^{*}_{s}, \boldsymbol\ell^{*}_{u}) \gets (\text{Stable Eigenline}, \text{Unstable Eigenline})$

    \State $P_0 \gets (0, y_0)\tran \in \Sigma$ \Comment{Unstable eigenline hits the border at $P_0$}

    \State $P_1 \gets (c y_0 + h_1, d y_0 + h_2)\tran$ \Comment{Image of $P_0$ on the right side of the border}
        \State $P_1 \in \boldsymbol\ell^{\Sigma}$ \Comment{First fold point of the unstable manifold}

    \State $n \gets \text{Border return time}$ \Comment{Minimum iterations for $P_0$ to return to the left side}
    \State $P_n \gets \mA_{\mathcal{R}}^n P_0 + (\mA_{\mathcal{R}}-\mI)^{-1} (\mA_{\mathcal{R}}^n-\mI) \vh$ \Comment{Compute $P_n$ directly from $P_0$}

    \State $P_{n+1} \gets \text{Next iteration}$ \Comment{Check for homoclinic intersection}

    \If{$\mathsf{L}(x_n,y_n) \cdot \mathsf{L}(x_{n+1},y_{n+1}) < 0$}
        \State \textbf{Return} "Homoclinic intersection exists"
    \Else
        \State \textbf{Return} "No homoclinic intersection"
    \EndIf
\EndProcedure
\end{algorithmic}
\end{algorithm}
\subsection{Simulated benchmarks}
\paragraph{Damped oscillator}
The $2M$-dimensional damped oscillator, with $\bm{x},\bm{y} \in \mathbb{R}^M$, is defined by 
\begin{align}
    \dot{x}_i &= y_i, \\[6pt]
    \dot{y}_i &= -\alpha_i x_i - \beta_i x_i^{3} - \gamma_i y_i
    \qquad i = 1,\dots,M,
\end{align}

For the illustration in Fig. \ref{fig: 1}, we used $M=10$ with parameters
\[
\bm{\alpha} =
\begin{bmatrix}
1.00,\,0.80,\,0.90,\,0.30,\,0.30,\,0.25,\,0.50,\,0.89,\,0.73,\,0.21
\end{bmatrix},
\]
\[
\bm{\beta} =
\begin{bmatrix}
-0.02,\,-0.10,\,0.01,\,-0.01,\,-0.011,\,0.01,\,-0.06,\,-0.052,\,0.03,\,-0.012
\end{bmatrix},
\]
\[
\bm{\gamma} =
\begin{bmatrix}
0.10,\,0.05,\,0.12,\,0.07,\,0.00,\,0.03,\,0.009,\,0.04,\,0.06,\,0.0008
\end{bmatrix}.
\]

\color{black}
\paragraph{Duffing system}
\label{Sec: Duffing}
The \textit{Duffing system} is a classical nonlinear DS that models a damped and periodically driven oscillator with a nonlinear restoring force. Originally introduced by Georg Duffing \citep{duffing1918}, the system is described by a second-order differential equation of the form
\[
\ddot{x} + \delta \dot{x} + \alpha x + \beta x^3 = \gamma \cos(\omega t),
\]
where \(x\) represents the displacement, \(\delta\) is a damping coefficient, \(\alpha\) and \(\beta\) determine the linear and nonlinear stiffness, respectively, and \(\gamma \cos(\omega t)\) is an external periodic forcing term. The Duffing system exhibits rich dynamical behavior, including periodic, quasi-periodic, and chaotic dynamics (depending on the forcing), making it a prototypical example in the study of nonlinear dynamics. 
Here we used for the parameters $\alpha=-1$, $\beta=0.1$, $\delta=0.5$, and $\gamma=0$.

\paragraph{Multistable decision making model}
Simple models of decision making in the brain assume multistability between several choice-specific attractor states, to which the system's state is driven as one or the other choice materializes \citep{wang2002probabilistic}. Here we employ a simple multistable model from \cite{gerstner2014neuronal}, defined by

\begin{align}
    \tau_E \frac{d h_{E1}}{dt}
        &= - h_{E1}
           + w_{EE}\, g_E(h_{E1})
           + w_{EI}\, \gamma h
           + R I_1, \\[6pt]
    \tau_E \frac{d h_{E2}}{dt}
        &= - h_{E2}
           + w_{EE}\, g_E(h_{E2})
           + w_{EI}\,\gamma h
           + R I_2, \\[6pt]
    \tau_{\text{inh}} \frac{d h_{\text{inh}}}{dt}
        &= - h_{\text{inh}}
           + w_{IE}\!\left( g_E(h_{E1}) + g_E(h_{E2}) \right).
\end{align}

\noindent
with \begin{align}
    g_E(h;\theta) &= \frac{1}{1 + e^{-(h-\theta)}}, \\[4pt]
\end{align} 
and parameters:
\begin{align*}
    \tau_E &= 10, &
    \tau_{\text{inh}} &= 5, &
    w_{EE} &= 16, &
    w_{EI} &= -15, &
    w_{IE} &= 12, \\
    R &= 1, &
    I_1 &= 40, &
    I_2 &= 40, &
    \gamma &= 1 &
    \theta &=5.
\end{align*}

\color{black}

\paragraph{Lorenz63}
\label{sec:lorenz}
The \textit{Lorenz63 system} \citep{lorenz1963deterministic} is a continuous-time dynamical system originally developed to model atmospheric convection. It describes the evolution of three state variables governed by the nonlinear differential equations
\begin{align*}
    \frac{dx_1}{dt} &= \sigma (x_2 - x_1), \\
    \frac{dx_2}{dt} &= x_1 (\rho - x_3) - x_2, \\
    \frac{dx_3}{dt} &= x_1 x_2 - \beta x_3,
\end{align*}
where \(x_1\), \(x_2\), and \(x_3\) denote, respectively, the convection rate, horizontal temperature difference, and vertical temperature difference. The parameters \(\sigma\), \(\rho\), and \(\beta\) correspond to physical constants related to the Prandtl number, Rayleigh number, and system geometry.

For specific values, e.g. \(\sigma = 10\), \(\rho = 28\), and \(\beta = \frac{8}{3}\), the system exhibits chaotic dynamics. These settings give rise to the well-known “butterfly attractor,” a prime example of deterministic chaos in low-dimensional systems.

\subsection{Empirical data (single cell recordings)}
For Fig. \ref{fig:3}D, in-vitro membrane potential recordings from a single cortical neuron were used \citep{hertag2012approximation}. Many types of cortical cells exhibit bistability between spiking activity and a stable equilibrium either near the resting potential or at a more depolarized level \citep{izhikevich2007dynamical,durstewitz2007dynamical,Dovzhenok2012-ds}, and this example demonstrates how our algorithm can be utilized to reveal the structure of the state space supporting this type of dynamics from real cells.

\newpage
\section{Additional Results}
\label{app:G}

\subsection{Additional figures }
\begin{figure}[h!]
    \centering
    \includegraphics[width=1.\linewidth]{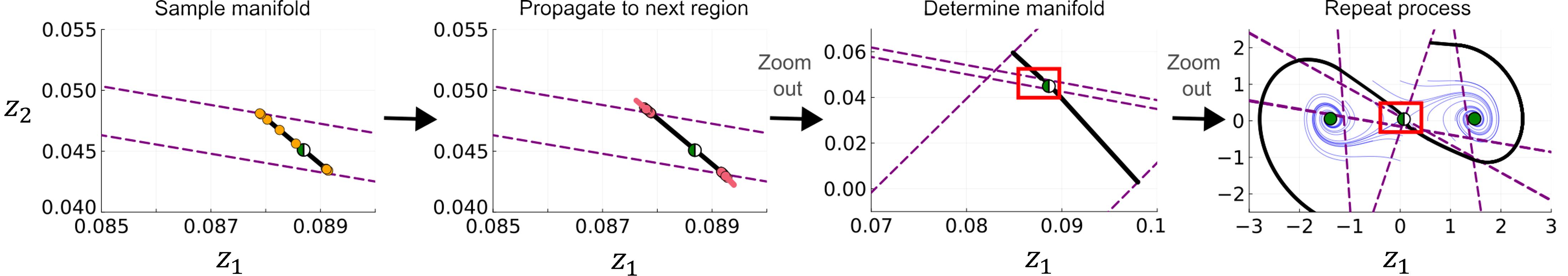}
    \caption{Illustration of the iterative procedure for computing stable manifolds with subregion boundaries of the shPLRNN ($M=2$, $H=10$) model in purple (dashed)  
  Step 1: The stable manifold (black) is initialized using the stable eigenvector at the saddle point (half-green), and sample points (orange) are placed along it. Step 2: These points are propagated until they reach a new linear subregion, where the flow field is evaluated. Step 3: The updated manifold is given by the first principal component of points and the flow. Step 4: Repeating this process iteratively reconstructs the full global structure of the stable manifold (black), overlaid with the underlying GT flow field (blue)
  }
    \label{fig: evolution2}
\end{figure}

\begin{figure}[h!]
    \centering
    \includegraphics[width=1.\linewidth]{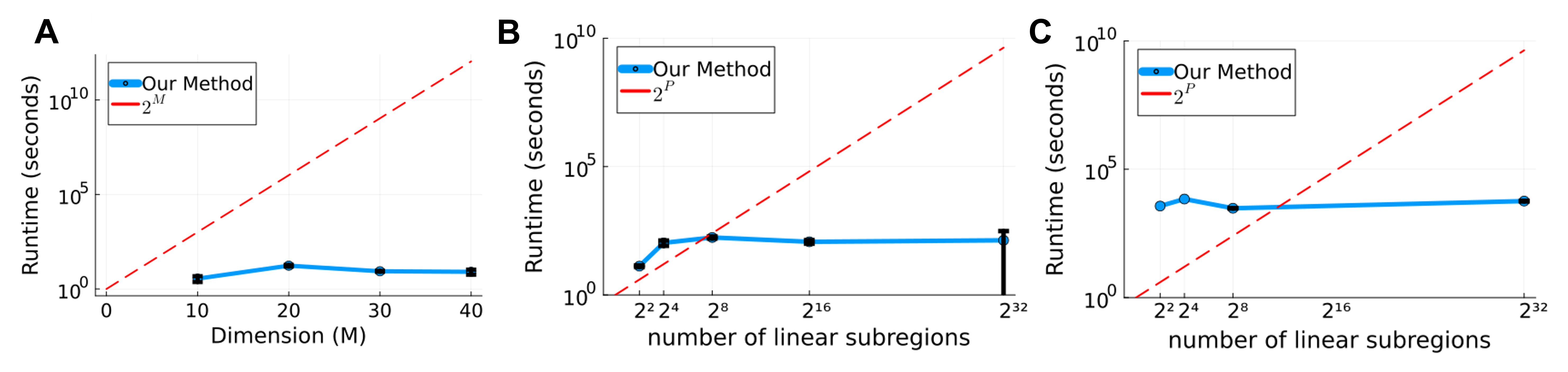}
    \caption{A) Algorithm runtime for determining the stable manifold of a saddle point in the Duffing system averaged across 5 runs (error bars = STDV). For a \textit{constant} number of linear subregions $2^P$, Algo. \ref{alg:new_manifold}'s runtime hardly increases as a function of model size $M$ for an ALRNN ($P=2$), confirming it is not significantly affected by the number of model parameters per se (note logarithmic scaling on y-axis). Algorithm \ref{alg:new_manifold} was run with constant $N_s=1000$ and $N_{iter}=2^P=4$. Parameter $N_s$ was chosen deliberately high (much higher than usually required) to simulate the worst-case scenario. Runtime was determined on an Intel Core i5-1240P. B) For a \textit{constant} model size $M$, Algo. \ref{alg:new_manifold}'s  runtime was hardly affected by the number $2^P$ of linear subregions for an ALRNN ($M=40$) when the manifold construction is restricted to the set of linear subregions explored by the data in the Duffing system. Algorithm \ref{alg:new_manifold} was run with $N_s=1000$ and $N_{iter}=2^P$. Runtime was determined on an Intel Core i5-1240P. C) Algorithm runtime for determining the stable manifold of a saddle point averaged across 3 runs (error bars = STDV) for the Lorenz63. For a \textit{constant} model size $M$, Algo. \ref{alg:new_manifold}'s  runtime was hardly affected by the number $2^P$ of linear subregions for an ALRNN ($M=100$) when the manifold construction is restricted to the set of linear subregions explored by the data. 
    The manifold was computed with $N_s=5000$ and $N_{iter}=2^P$; as in A-B, $N_s$ was chosen deliberately high to cover the worst case. All models were trained on the Lorenz system. Runtime was determined on an Intel Xeon Gold 6248. We emphasize that in general the scaling may strongly depend on the system's actual dynamics and topological structure, such that general statements regarding scaling are therefore difficult.}
    \label{fig: runtime_P_const}
\end{figure}

\begin{figure}[h!]
    \centering
    \includegraphics[width=1.\linewidth]{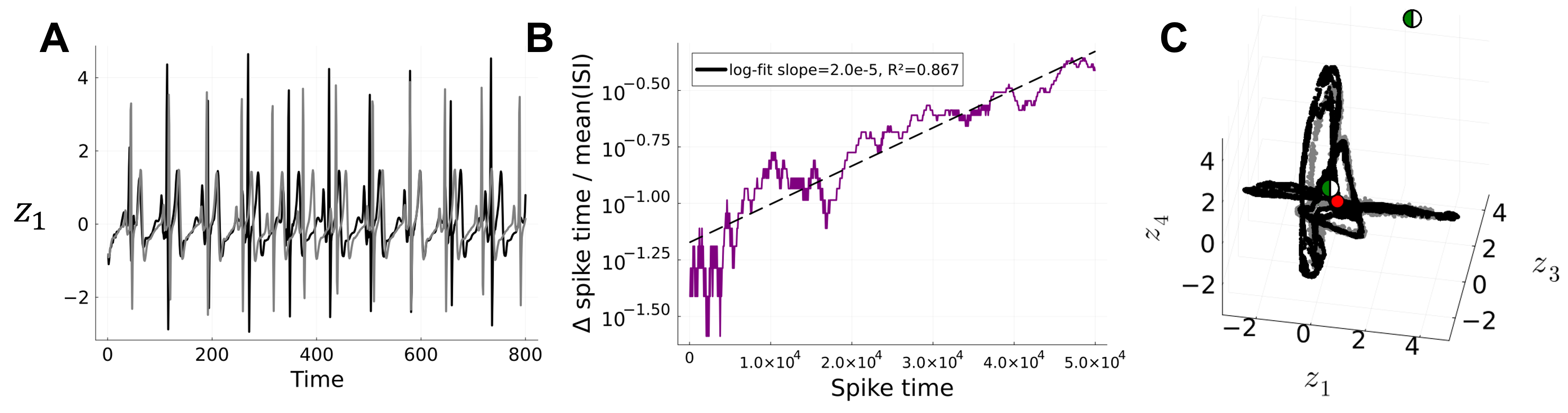}
    \caption{ALRNN ($M = 80$, $P = 4$) trained on a 5-dimensional embedding of a human ECG recording. A) Example of ALRNN-simulated ECG activity (black) and true human ECG (gray).
    B) Exponential growth in differences in spike timing between ALRNN-generated trajectories from two slightly different initial conditions as a function of spike time (ISI = interspike interval). While the divergence is exponential (note the logarithmic scale of the y-axis), thus suggesting the presence of chaos, the divergence rate appears to be rather small. This impression is corroborated by the estimated slightly positive maximum Lyapunov exponent of $0.0074 \pm 0.00083$ (mean ± SD), while the sum of all Lyapunov exponents is $<0$, as estimated from the model Jacobians evolved for 10,000 time steps for 5 different initial conditions. It thus appears the system is weakly chaotic \citep{alligood1996,guckenheimer2013nonlinear}, but it is difficult to conclusively establish this from these observations alone. C) Visualization of the presumably chaotic attractor for the real data (gray, via delay-embedding; \cite{kantz2004nonlinear}) and as generated by the ALRNN (black) in a $3d$ subspace of the $80$-dimensional state space. A heteroclinic intersection (within the limits of numerical precision, $\approx10^{-10}$) of the saddle nodes in green was identified by our algorithm at the red dot, thus confirming the presence of chaos in this system.}
    \label{fig: ECG_homoclinic}
\end{figure}

\begin{figure}[h!]
    \centering
    \includegraphics[width=.8\linewidth]{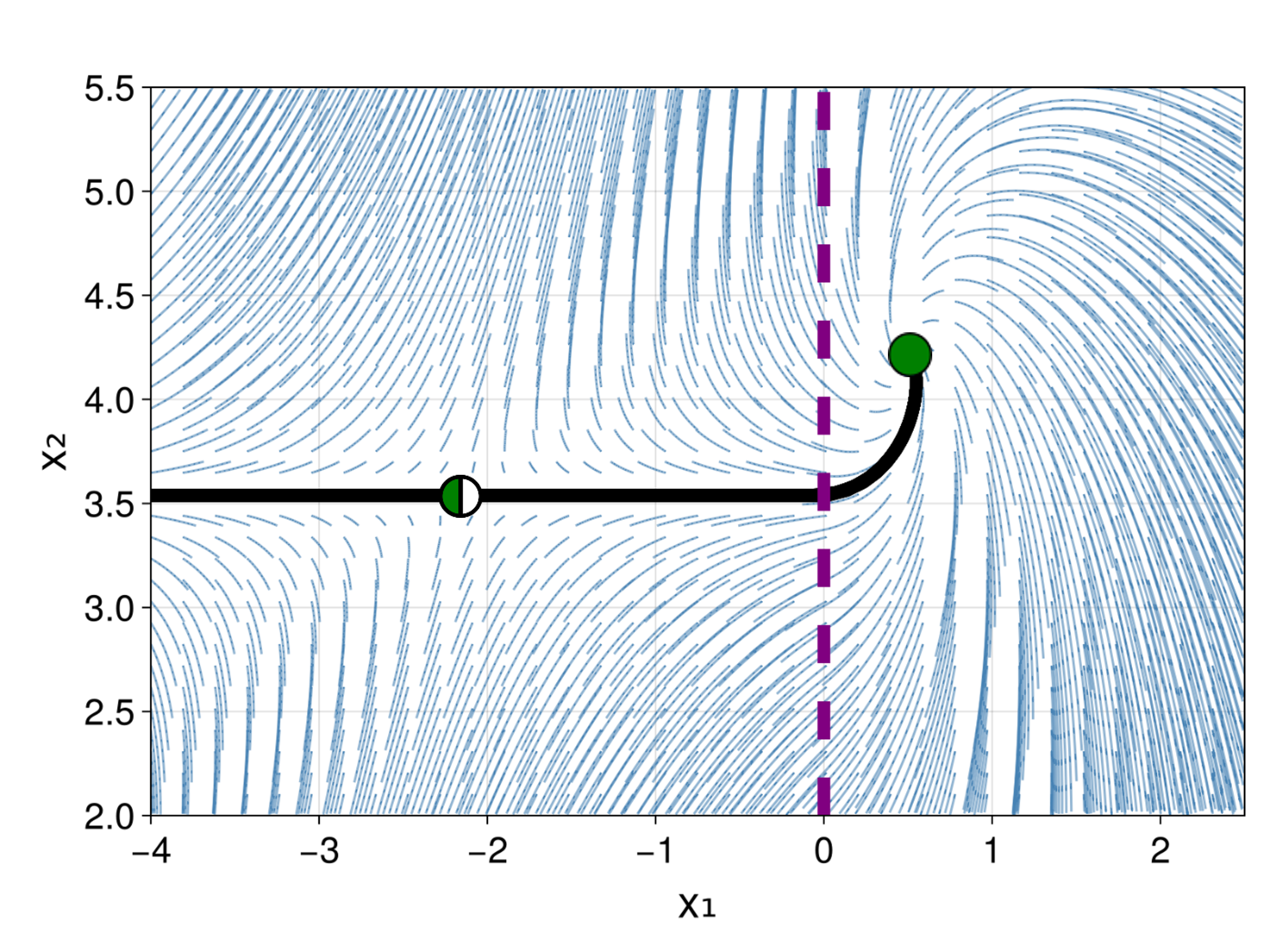}
    \caption{Illustration of how curvature can arise in the un-/stable manifolds as borders to other linear subregions are crossed. The stable manifold (black) of the saddle (half-green) is a straight line in the first linear subregion (left hand side), but becomes a curve as it enters the second subregion (right from dashed purple border), where the dynamics follows a spiral. The parameters of the $2d$ PLRNN used for illustration were: $A=[0.93\ 0.0; 0.0 \ 0.92]$ , $W=[0.26 \ 0.080; -0.21 \ 0.24]$, $h=[-0.43, \ -0.57]$.}
    \label{fig: Curvature_Example}
\end{figure}

\newpage

\subsection{Analytical condition for homoclinic intersection of the first and second fold points}\label{sub-anal}

Homo- or heteroclinic intersections between stable and unstable manifolds imply the existence of infinitely many such intersections, leading to a `horseshoe structure' and the existence of chaotic orbits \citep{wig}. In this subsection we derive the analytical results needed to confirm such intersections using Algorithm \ref{alg:homoclinic}.

Let $\, \mathcal{O}^{*}_{\mathcal{L}}=(x^{*}_{\mathcal{L}}, y^{*}_{\mathcal{L}})\tran\,$ be an admissible saddle fixed point in the left sub-region $\mathcal{L}$. Since $\, \mathcal{O}^{*}_{\mathcal{L}}\,$ is a saddle, $\mA_{\mathcal{L}}$ has one stable and one unstable eigenvalue $\lambda_{s}$ and $\lambda_{u}$, respectively. Let us denote the line generated by the associated stable eigenvector by $\boldsymbol\ell^{*}_{s}$ and the line produced by the corresponding unstable eigenvector by $\boldsymbol\ell^{*}_{u}$. Since the unstable eigenvector is $\vv_{u}\, = \,(v_1,v_2)\tran\,=\,(\frac{\lambda_{u}-d}{b_l},1)\tran$, $\boldsymbol\ell^{*}_{u}$ hits the border $x=0$ at $P_0=(0, y_0)\tran \in \Sigma$ where
    \begingroup
\allowdisplaybreaks
    \begin{align}\nonumber
     y_0 &\, = \, y^{*}_{\mathcal{L}} - x^{*}_{\mathcal{L}}\, \frac{v_2}{v_1}
     \\[1ex]\nonumber
& \, = \, \frac{b_{l}\, h_1 +(1-a_{l}) \, h_2}{1-\Gamma_{\mathcal{L}}+D_{\mathcal{L}}} - \frac{(1-d)\, h_1 + c \, h_2}{1-\Gamma_{\mathcal{L}}+D_{\mathcal{L}}}\Big( \frac{b_l}{\lambda_{u}-d}\Big)
     \\[1ex]\label{eq-y0}
     & \, = \, \frac{\big(b_l\,h_1 + (1-a_l)h_2\big)(\lambda_{u}-d)- b_l\big(c\,h_2 + (1 - d)h_1\big) }{(\lambda_{u}-d)(1-\Gamma_{\mathcal{L}}+D_{\mathcal{L}})}.   
    \end{align}
    \endgroup
The image of $P_0$ is the first fold point of the unstable manifold of $\,\mathcal{O}^{*}_{\mathcal{L}}$, and so all its images will also be fold points. Its coordinate is $P_1 \, = \, (c\, y_0+h_1, d\, y_0+h_2)\tran $. The image of the first fold point is the
second fold point $P_2 \, = \,(x_2, y_2)\tran$ with coordinates
\begin{align}\nonumber
& \begin{cases}
x_2 = c(a_l+d)y_0 + (a_l+1)h_1 +ch_2
\\[1ex]
y_2  =  (b_l c +d^2) y_0 + b_l h_1+ (d+1)h_2
\end{cases}, \hspace{.3cm} \text{if} \, \,  c\, y_0+h_1 <0
\\[1ex]
& \begin{cases}
x_2 = c(a_r+d)y_0 + (a_r+1)h_1 +ch_2
\\[1ex]
y_2  = (b_r c +d^2) y_0 + b_r h_1+ (d+1)h_2
\end{cases}, \hspace{.3cm} \text{if} \, \,  c\, y_0+h_1 > 0
\end{align}
Now we check whether or not the points $P_1$ and $P_2$ are on opposite sides of the stable eigenline $\boldsymbol\ell^{*}_{s}$. When $P_1$ and $P_2$ are on opposite sides of $\boldsymbol\ell^{*}_{s}$, then the unstable manifold must have intersected the stable manifold. Thus, we have a homoclinic intersection which implies the occurrence of chaotic dynamics. 
Since the stable eigenvector is $\vv_{s}\, = \,(\frac{\lambda_{s}-d}{b_l},1)\tran$ and the eigenline $\boldsymbol\ell^{*}_{s}$ passes through $\mathcal{O}^{*}_{\mathcal{L}}=(x^{*}_{\mathcal{L}}, y^{*}_{\mathcal{L}})\tran\,$, 
$\boldsymbol\ell^{*}_{s}$ can be computed as
\begin{align}\nonumber
 \boldsymbol\ell^{*}_{s}:\, \, \,  & \, \frac{\lambda_{s}-d}{b_l}\,y  - x \, + \, \frac{(1-d)\, h_1 + c \, h_2}{1-\Gamma_{\mathcal{L}}+D_{\mathcal{L}}}
\\[1ex]\label{eq-L}
& \hspace{.5cm}\, - \, \Big(\frac{\lambda_{s}-d}{b_l}\Big) \, \frac{b_{l}\, h_1 +(1-a_{l}) \, h_2}{1-\Gamma_{\mathcal{L}}+D_{\mathcal{L}}} \, =: \, \mathsf{L}(x,y) = 0. 
\end{align}

Now there are two possibilities:\\[2ex]
\textbf{Case I}:$\,\mathsf{L}(x_1,y_1)  \cdot  \mathsf{L}(x_2,y_2) \, \leq \,  0$\\[2ex]
If $\,\mathsf{L}(x_1,y_1)  \cdot  \mathsf{L}(x_2,y_2) \, < \,  0$, 
then $P_{1}$ and $P_{2}$ are on opposite sides of the stable manifold, while $\,\mathsf{L}(x_1,y_1)  \cdot  \mathsf{L}(x_{2},y_{2}) \, = \,  0$ implies that at least one of the points $P_{1}$ and $P_{2}$ lies exactly on the stable manifold. In both cases there exists a homoclinic intersection, i.e., whenever we have 
\begin{align}\nonumber
& \bigg(\hspace{-.08cm}\frac{\lambda_{s}-d}{b_l}(d y_0 +h_2) - (cy_0 +h_1) +  \mathcal{M} \bigg) 
\hspace{-.11cm}
\bigg(\hspace{-.08cm}\frac{\lambda_{s}-d}{b_l}\big((b_{l/r} c +d^2) y_0 
\\[1ex]\nonumber
& \hspace{.2cm} + b_{l/r} h_1  +  (d+1)h_2 \big) -  \big(c(a_{l/r}+d)y_0 + (a_{l/r}+1)h_1 +ch_2 \big)  
\\[1ex]\label{homo-con}
& \hspace{.2cm}
+   \mathcal{M} \bigg)  \, \leq \, 0,  
\end{align}
where $\mathcal{M} \, = \, \frac{(1-d)\, h_1 + c \, h_2}{1-\Gamma_{\mathcal{L}}+D_{\mathcal{L}}} -  \Big(\frac{\lambda_{s}-d}{b_l}\Big) \, \frac{b_{l}\, h_1 +(1-a_{l}) \, h_2}{1-\Gamma_{\mathcal{L}}+D_{\mathcal{L}}}$.\\

The homoclinic intersection point $P_{hom}\, = \, (x_{hom}, y_{hom})\tran$ is the point of intersection between
the line joining the two fold points and the stable eigenline $\boldsymbol\ell^{*}_{s}$, and hence given by
\begin{align}
 \begin{cases}
 x_{hom}\, =\, (x_2-x_1)\beta +x_1
 \\[1ex]
 y_{hom}\, =\, (y_2-y_1)\beta +y_1
 \end{cases},   
\end{align}
where 
\begin{align}
\beta \, = \, \frac{x_1-\frac{\lambda_{s}-d}{b_l}-\mathcal{M}}{\frac{\lambda_{s}-d}{b_l}(y_2-y_1)-(x_2-x_1)}.    
\end{align}
Finally, we must ensure that the intersection happens before the stable eigenline hits the border.  
For this, we need to have $\, x_{home}\, x^{*}_{\mathcal{L}}>0 \,$, which implies the intersection point $P_{hom}$ and the fixed point $\mathcal{O}^{*}_{\mathcal{L}}$ are on the same side of $\Sigma$.
~\\[2ex]
\textbf{Case II}:$\,\mathsf{L}(x_1,y_1)  \cdot  \mathsf{L}(x_2,y_2) \, > \,  0$\\[2ex]
If $\,\mathsf{L}(x_1,y_1)  \cdot  \mathsf{L}(x_2,y_2) \, > \,  0$, 
we need to check whether the unstable manifold intersects with the part of the stable manifold which ensues after folding along the $y$-axis. For this we %
have to calculate more points of the global stable manifold. Since $T$ is assumed to be invertible, the global stable manifold is formed by the union of all preimages (inverses) of any rank of the local stable set (a segment of the local stable eigenline). Assume, under the action of $T^{-1}$, the line $\boldsymbol\ell^{*}_{s}$ maps to the $y$-axis and intersects it at $\tilde{P}_0 =(0, \tilde{y}_0)\tran \in \Sigma$. Then $\tilde{P}_0$ is the first fold point of the stable manifold of $\,\mathcal{O}^{*}_{\mathcal{L}}$, and $\tilde{y}_0$ is given by
    \begingroup
\allowdisplaybreaks
    \begin{align}
     \tilde{y}_0 \, = \, \frac{\big(b_l\,h_1 + (1-a_l)h_2\big)(\lambda_{s}-d)- b_l\big(c\,h_2 + (1 - d)h_1\big) }{(\lambda_{s}-d)(1-\Gamma_{\mathcal{L}}+D_{\mathcal{L}})}.   
    \end{align}
    \endgroup
The preimage of $\tilde{P}_0$ is the
second fold point $\tilde{P}_{-1} \, = \,(\tilde{x}_{-1}, \tilde{y}_{-1})\tran$
with coordinates 
\begin{align}\nonumber
& \begin{cases}
\tilde{x}_{-1}\, = \, \frac{1}{D_{\mathcal{L}}}\big( -c \tilde{y}_0 + c h_2-d h_1\big)
\\[1ex]
\tilde{y}_{-1} \, = \, \frac{1}{D_{\mathcal{L}}}\big( a_l \tilde{y}_0 + b_l h_1 - a_l h_2\big)
\end{cases}, \hspace{.3cm} \text{if} \, \, \,  \frac{\varphi\tran(\tilde{P}_0-\vh)}{D_{\mathcal{L}}}  \leq 0
\\[1ex]
& \begin{cases}
\tilde{x}_{-1}\, = \,\frac{1}{D_{\mathcal{R}}}\big( -c \tilde{y}_0 + c h_2-d h_1\big)
\\[1ex]
\tilde{y}_{-1} \, = \, \frac{1}{D_{\mathcal{R}}}\big( a_r \tilde{y}_0 + b_r h_1 - a_r h_2\big)
\end{cases}, \hspace{.3cm} \text{if} \, \, \, \frac{\varphi\tran(\tilde{P}_0-\vh)}{D_{\mathcal{R}}} \geq 0
\end{align}
where
\begin{align}
\frac{\varphi\tran(\tilde{P}_0-\vh)}{D_{\mathcal{L}/\mathcal{R}}}\, = \,\frac{- d\, h_1}{D_{\mathcal{L}/\mathcal{R}}}-\frac{c}{D_{\mathcal{L}/\mathcal{R}}}(\tilde{y}_0-h_2). 
\end{align}
Now the line joining the two fold points $\tilde{P}_0$ and $\tilde{P}_{-1}$ is given by $ \tilde{\mathsf{L}}(x,y) \, = \,0$ where
\begin{align}\label{}
 \tilde{\mathsf{L}}(x,y) \,:= \, y-\tilde{y}_0 - \frac{\tilde{y}_{-1}-\tilde{y}_0}{\tilde{x}_{-1}} \, x.
\end{align}
If $\,\tilde{\mathsf{L}}(x_1,y_1)  \cdot  \tilde{\mathsf{L}}(x_2,y_2) \, < \,  0$, 
then $P_{1}$ and $P_{2}$ are on opposite sides of the stable manifold, and the unstable manifold intersects the stable manifold at $\tilde{P}_{hom}\, = \, (\tilde{x}_{hom}, \tilde{y}_{hom})\tran$ with
\begin{align}
 \begin{cases}
 \tilde{x}_{hom}\, =\, (x_2-x_1)\tilde{\beta} +x_1
 \\[1ex]
 \tilde{y}_{hom}\, =\, (y_2-y_1)\tilde{\beta} +y_1
 \end{cases},   
\end{align}
where 
\begin{align}
\tilde{\beta} \, = \, \frac{(\tilde{y}_0-y_1)\tilde{x}_{-1}+(\tilde{y}_{-1}-\tilde{y}_0)x_1}{(y_2-y_1)\tilde{x}_{-1}-(\tilde{y}_{-1}-\tilde{y}_0)(x_2-x_1)}.    
\end{align}
We need to have $\, \tilde{x}_{home}\, \tilde{x}_{-1} >0 \,$, which means the intersection point $\tilde{P}_{hom}$ and the point $\tilde{P}_{-1}$ are on the same side of $\Sigma$.
\begin{remark}
An analogous procedure can be performed to analytically obtain homoclinic intersections for the fixed point $\, \mathcal{O}^{*}_{\mathcal{R}}\in \mathcal{R}$. 
\end{remark}
\subsubsection{Homoclinic intersections for further iterations of fold points} 
In order to determine homoclinic intersections, it may be necessary to check further iterations of fold points for which we would need a recursive procedure. This will involve calculating the $n$-th power of a $2\times 2$ matrix, and we therefore first prove the following Proposition: \\ 
\begin{proposition}\label{pro-1}
Let $\,\mM \, = \, \begin{pmatrix}
a & c \\[1ex]
b & d
\end{pmatrix}\,$ have two distinct eigenvalues
\begin{align}\label{}
\lambda_{1,2} \, = \, \frac{a+d}{2} \mp \frac{\sqrt{(a-d)^2+4\, b \, c}}{2} \, = \, \frac{\Gamma}{2} \mp \frac{\sqrt{\Gamma^2-4\,\mathcal{D}}}{2},   
\end{align}
where $\Gamma$ and $\mathcal{D}$ are the trace and determinant of $\mM$. Then, for every $\,n \in \mathbb{N}$\\
\begin{align}\label{M-n}
\mM^n \, = \, 
\begin{pmatrix}
A_{n+1} -d\,A_n & c\, A_n \\[1ex]
b A_n &  d\,A_n - \mathcal{D}\, A_{n-1}
\end{pmatrix}
\end{align}
where 
\begin{align}\label{An}
A_n \, = \,\frac{\lambda_1^n-\lambda_2^n}{\lambda_1 - \lambda_2}.
\end{align}
%
\end{proposition}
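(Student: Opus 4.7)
The plan is to reduce $\mM^n$ to a linear combination of $\mM$ and the identity using the Cayley--Hamilton theorem, and then pin down the coefficients by evaluating the same identity on the two eigenvalues. Concretely, since $\mM$ has characteristic polynomial $\lambda^2 - \Gamma \lambda + \mathcal{D}$, Cayley--Hamilton gives $\mM^2 = \Gamma \mM - \mathcal{D} \mI$, and a straightforward induction then shows that every power has the form
\[
\mM^n \;=\; \alpha_n \mM + \beta_n \mI
\]
for scalars $\alpha_n, \beta_n$ that satisfy the same linear recurrence $x_{n+1} = \Gamma x_n - \mathcal{D} x_{n-1}$.

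Next, I would identify these scalars. Because $\mM$ has two distinct eigenvalues, the identity $\mM^n = \alpha_n \mM + \beta_n \mI$ must also hold spectrally: $\lambda_i^n = \alpha_n \lambda_i + \beta_n$ for $i = 1,2$. Subtracting the two scalar equations immediately yields $\alpha_n = (\lambda_1^n - \lambda_2^n)/(\lambda_1 - \lambda_2) = A_n$, and substituting back gives $\beta_n = \lambda_1^n - A_n \lambda_1 = -\lambda_1 \lambda_2 A_{n-1} = -\mathcal{D} A_{n-1}$, where I used $\lambda_1 \lambda_2 = \mathcal{D}$ and simplified the telescoping difference.

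Substituting $\alpha_n = A_n$ and $\beta_n = -\mathcal{D} A_{n-1}$ into $\mM^n = \alpha_n \mM + \beta_n \mI$ gives
\[
\mM^n \;=\; \begin{pmatrix} a A_n - \mathcal{D} A_{n-1} & c A_n \\ b A_n & d A_n - \mathcal{D} A_{n-1} \end{pmatrix}.
\]
The only remaining discrepancy with the claimed formula \eqref{M-n} is in the $(1,1)$ entry, where I have $a A_n - \mathcal{D} A_{n-1}$ and the statement has $A_{n+1} - d A_n$. These two expressions agree precisely when $A_{n+1} = (a+d) A_n - \mathcal{D} A_{n-1} = \Gamma A_n - \mathcal{D} A_{n-1}$, which is exactly the recurrence inherited from the characteristic equation $\lambda^2 = \Gamma \lambda - \mathcal{D}$ applied to the sequence $A_n$.

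I do not expect any real obstacle here; the argument is essentially bookkeeping once Cayley--Hamilton is invoked. If one prefers to avoid the abstract step, an equally clean alternative is direct induction on $n$: verify the base case $n = 1$ using $A_0 = 0$, $A_1 = 1$, $A_2 = \Gamma$, and then multiply the inductive hypothesis on the left by $\mM$, using the recurrence $A_{n+1} = \Gamma A_n - \mathcal{D} A_{n-1}$ to recognize the entries of $\mM^{n+1}$ in the required form. Either route delivers \eqref{M-n} with no subtlety beyond verifying the recurrence on $A_n$.
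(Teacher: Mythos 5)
Your proof is correct, and it takes a genuinely different route from the paper's. The paper proves the formula by explicit diagonalization, writing $\mM = \mV\Lambda\mV^{-1}$, computing $\mM^n = \mV\Lambda^n\mV^{-1}$ entrywise, and then performing algebraic simplifications to recognize the $A_n$ pattern; it must moreover split into the cases $b\neq 0$ and $b=0$ because the eigenvector matrix $\mV$ used there has $b$ in a denominator. Your argument instead invokes Cayley--Hamilton to get $\mM^n = \alpha_n\mM + \beta_n\mI$, then pins down $\alpha_n,\beta_n$ by evaluating on the (distinct) eigenvalues, which gives $\alpha_n = A_n$ and $\beta_n = -\mathcal{D}A_{n-1}$ directly; the identity $A_{n+1} = \Gamma A_n - \mathcal{D}A_{n-1}$, which follows from the characteristic equation, then reconciles your $(1,1)$ entry with the paper's. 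The advantage of your route is uniformity: Cayley--Hamilton and the spectral evaluation need no case analysis, so the $b=0$ case is absorbed for free, whereas the paper has to treat it separately. The paper's diagonalization, in exchange, produces the answer by direct computation with no need to first posit the ansatz $\alpha_n\mM + \beta_n\mI$. Both arguments rest on the same ingredient --- two distinct eigenvalues --- and both are sound.
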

\begin{proof}
In our case $M$ is diagonalizable, so for $b\neq 0$
\begin{align}\nonumber
\mM & \, = \, \mV \, \Lambda \, \mV^{-1}
\\[1ex]
&\, = \, 
\frac{1}{\lambda_1-\lambda_2}
\,
\begin{pmatrix}
\frac{\lambda_1 -d}{b} & \frac{\lambda_2 -d}{b} \\[1ex]
1 & 1
\end{pmatrix}
\, 
\begin{pmatrix}
\lambda_1 & 0 \\[1ex]
0 & \lambda_2
\end{pmatrix}
\,
\begin{pmatrix}
b & & d-\lambda_2 \\[1ex]
-b & & \lambda_1-d 
\end{pmatrix}.
\end{align}
Therefore
\begingroup
\allowdisplaybreaks
\begin{align}\nonumber
\mM^n &\, = \, \mV \, \Lambda^n \, \mV^{-1}  
\\[1ex]\nonumber
& \, = \,
\frac{1}{\lambda_1-\lambda_2}
\,
\begin{pmatrix}
\frac{\lambda_1 -d}{b} & \frac{\lambda_2 -d}{b} \\[1ex]
1 & 1
\end{pmatrix}
\, 
\begin{pmatrix}
\lambda_1^n & 0 \\[1ex]
0 & \lambda_2^n
\end{pmatrix}
\,
\begin{pmatrix}
b & & d-\lambda_2 \\[1ex]
-b & & \lambda_1-d 
\end{pmatrix} 
\\[2ex]\nonumber
& \, = \, \frac{1}{\lambda_1-\lambda_2} \times
\\[2ex]\nonumber
&
\hspace{.3cm} \begin{pmatrix}
\lambda_2^n(d - \lambda_2)- \lambda_1^n(d - \lambda_1) & -\frac{(d - \lambda_1)(d - \lambda_2)}{b} (\lambda_1^n - \lambda_2^n)\\[1ex]
b (\lambda_1^n - \lambda_2^n)  &\lambda_1^n(d - \lambda_2) - \lambda_2^n(d-\lambda_1)
\end{pmatrix}
\\[2ex]\nonumber
& \, = \, \begin{pmatrix}
A_{n+1} -d\,A_n & & -\frac{d^2-\Gamma\,d+\mathcal{D}}{b} A_n \\[1ex]
b A_n & &  d\,A_n - \mathcal{D}\, A_{n-1}
\end{pmatrix}
\\[2ex]\label{}
&
 \, = \, \begin{pmatrix}
A_{n+1} -d\,A_n & c\, A_n \\[1ex]
b A_n &  d\,A_n - \mathcal{D}\, A_{n-1}
\end{pmatrix}.
\end{align}
\endgroup
If $\,b=0$, then the eigenvalues of $M$ are $\lambda_1=a$ and $\lambda_2=d$ ($a \neq d$). Thus,
\begin{align}\label{}
\mM &\, = \, \mV \, \Lambda \, \mV^{-1} \, = \, 
\begin{pmatrix}
1 & \frac{-c}{a-d} \\[1ex]
0 & 1
\end{pmatrix}
\, 
\begin{pmatrix}
a & 0 \\[1ex]
0 & d
\end{pmatrix}
\,
\begin{pmatrix}
1 & \frac{c}{a-d} \\[1ex]
0 & 1 
\end{pmatrix},
\end{align}
and so (for $\,n \in \mathbb{N}$)
\begingroup
\allowdisplaybreaks
\begin{align}\nonumber
\mM^n \, = \, \mV \, \Lambda^n \, \mV^{-1} & \, = \,
\begin{pmatrix}
1 & \frac{-c}{a-d} \\[1ex]
0 & 1
\end{pmatrix}
\, 
\begin{pmatrix}
a^n & 0 \\[1ex]
0 & d^n
\end{pmatrix}
\,
\begin{pmatrix}
1 & \frac{c}{a-d} \\[1ex]
0 & 1 
\end{pmatrix}
\\[1ex]\label{M-n-0}
&\, = \, \begin{pmatrix}
a^n & \frac{c(a^n-d^{\,n})}{a-d} \\[1ex]
0 &  d^{\,n}
\end{pmatrix}, 
\end{align}
\endgroup
which yields equation \eqref{M-n} for $b=0$. \QED
\end{proof}

\newpage

\begin{align*}
    \vz_t = F(\vz_{t-1}) = \mA \vz_{t-1} + \mW \cdot \text{ReLU}(\vz_{t-1}) + \mC \vs_t + \vh
\end{align*}

\begin{align*}
    \mD_{\Omega(t)} \coloneq \text{diag}(\vd_{\Omega (t)}) \text{  with  } d_{m, t} = \begin{cases}
        1 & z_{m, t} >0\\
        0 & \text{else}
    \end{cases}
\end{align*}

\begin{align*}
    \vz_t = F(\vz_{t-1}) &= (\mA + \mW \mD_{\Omega(t-1)}) \vz_{t-1} + \mC \vs_t + \vh\\
    &\eqcolon \mW_{\Omega(t-1)} \vz_{t-1}+ \mC \vs_t + \vh
\end{align*}

\begin{align*}
    \vz_t = F(\vz_{t-1}) &= \mA \vz_{t-1} + \mW \cdot \text{ReLU}(\vz_{t-1}) + \mC \vs_t + \vh\\
\\
    \mD_{\Omega(t)} \coloneq &\,\text{diag}(\vd_{\Omega (t)}) \,\,\text{   with   } \,\,d_{m, t} = \begin{cases}
        1 & z_{m, t} >0\\
        0 & \text{else}
    \end{cases}\\
\\
    \vz_t = F(\vz_{t-1}) &= (\mA + \mW \mD_{\Omega(t-1)}) \vz_{t-1} + \mC \vs_t + \vh\\
    &\eqcolon \mW_{\Omega(t-1)} \vz_{t-1}+ \mC \vs_t + \vh
\end{align*}

\begin{align*}
    \mA + \mW\mD
\end{align*}

\begin{align*}
    \vz_{t-1} = \pqty{\mA + \mW\mD_{t-1}}^{-1}\pqty{\vz_{t}-\vh}
\end{align*}

\end{document}